\definecolor{darkblue}{rgb}{0, 0, 0.5}
\title{Structured Voronoi Sampling}
\author{
Afra Amini$^{1}$ \qquad Li Du$^2$ \qquad Ryan Cotterell$^1$ \\
$^1$ETH Z\"{u}rich \qquad $^2$Johns Hopkins University \\
    $\{$\texttt{\hypersetup{hidelinks}\href{mailto:afra.amini@inf.ethz.ch} {afra.amini}}, \texttt{\hypersetup{hidelinks} \href{mailto:ryan.cotterell@inf.ethz.ch}{ryan.cotterell}}$\}$\texttt{@inf.ethz.ch} \\
    \texttt{\hypersetup{hidelinks} \href{mailto:leodu@cs.jhu.edu}{leodu@cs.jhu.edu}}
}
\begin{document}
\maketitle
\begin{abstract}
% There has been a growing interest in the development of gradient-based sampling algorithms for text generation, especially in the context of controlled generation.
Gradient-based sampling algorithms have demonstrated their effectiveness in text generation, especially in the context of controlled text generation. 
However, there exists a lack of theoretically grounded and principled approaches for this task. In this paper, we take an important step toward building a principled approach for sampling from language models with gradient-based methods. We use discrete distributions given by language models to define densities and develop an algorithm based on Hamiltonian Monte Carlo to sample from them. We name our gradient-based technique Structured Voronoi Sampling (\svs). In an experimental setup where the reference distribution is known, we show that the empirical distribution of \svs samples is closer to the reference distribution compared to alternative sampling schemes. Furthermore, in a controlled generation task, \svs is able to generate fluent and diverse samples while following the control targets significantly better than other methods.
\newline
 \newline
 \vspace{0.1em}
  \hspace{7.0em}\includegraphics[width=1.05em,height=1.0em]{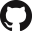}\hspace{.3em}\parbox{\dimexpr\linewidth-2\fboxsep-2\fboxrule}
  {\url{https://github.com/AfraAmini/svs}}
\end{abstract}

\section{Introduction}
Gradient-based sampling algorithms such as Hamiltonian Monte Carlo \citep[\hmc;][]{hmc} and Langevin dynamics \citep{langevin} are widely used in Bayesian inference due to their efficiency in drawing samples from high-dimensional space \citep{stan}.
Such algorithms construct a Markov Chain that has the desired distribution as its stationary distribution and use the gradient information of this distribution to efficiently navigate the state space.
Additionally, gradient-based sampling schemes have recently been deployed in computer vision to generate high-quality images from state-of-the-art models \citep{song, ho}, and are a popular choice for tasks such as image synthesis \citep{image-synthesis} and image-to-image translation \citep{i2i}.

In natural language processing, there have been several attempts to apply gradient-based sampling techniques to sampling text from neural language models \citep{diffusionlm, cold, kumar-2022}. The motivation behind this approach to text generation is to sample from energy-based probabilistic models, where the normalization factor is not tractable. One such case is in controlled text generation, where energy functions are usually defined as a linear combination of LM probabilities and the probability of satisfying a set of predefined constraints \citep{diffusionlm, cold, kumar-2022}.
In contrast to computer vision, however, applying gradient-based sampling schemes to text generation is nuanced as text, in contrast to images, is discrete.\looseness=-1

Upon closer inspection, none of the proposed algorithms actually defines a valid Markov Chain Monte Carlo \citep[MCMC;][]{metropolis1953equation, hastings} scheme that will draw samples from the model in the limit.
For instance, \citet{cold} relax the language model from a distribution over strings to a distribution over logits.
While the relaxation does transform the language model into a continuous distribution, it introduces bias.
\citet[\mucola;][]{kumar-2022} take a different approach.
They derive a constrained gradient-based sampler where the constraint is enforced through a projection.
However, the projection invalidates the MCMC procedures, leading to an algorithm without guarantees.\footnote{In fact, a similar projection step is often applied in computer vision applications, which motivated \citet{lou2023reflected} to develop a principled approach that avoids projection.}

We derive a principled Hamiltonian Monte Carlo scheme for generating text from language models, which we call a structured Voronoi sampler.
Our scheme consists of two steps.
First, we give a recipe for encoding discrete distributions as densities over $\R^d$; we term the resulting encoding Voronoi measures.\footnote{We call the measures structured Voronoi samplers when the state face is larger and factored.}
Second, we derive a refractive Hamiltonian Monte Carlo algorithm \citep{rhmc} for sampling from an arbitrary Voronoi measure. 
In our theoretical analysis, we show that, despite the presence of discontinuities, we are able to give proof that our sampler satisfies the detailed balance condition and, thus, is a correct MCMC scheme.

To empirically evaluate the performance of structured Voronoi sampling, we begin by applying it to a toy example where the exact reference probability distribution is known.
We compare the empirical distribution of drawn samples with the reference distribution and show Voronoi sampler's distribution is closer to the reference distribution than \mucola or unconstrained \hmc.
Furthermore, we use our sampling scheme for controlled generation, where the goal is to use GPT-2 to generate restaurant reviews for a target type of food, e.g., Italian, Fast food, or Japanese, and separately to generate text with a positive sentiment.
We find that structured Voronoi sampling outperforms \fudge \citep{yang-klein-2021-fudge}, \mucola, and Langevin Dynamics algorithms in terms of adhering to the control target. Additionally, the samples generated by structured Voronoi sampling are comparably fluent and diverse to those produced by the other methods.
\section{Language Models}
Let $\alphabet$ be an alphabet, a finite, non-empty set.
By $\alphabet^* \defeq \bigcup_{n=0}^\infty \alphabet^n$, we denote the Kleene closure of $\alphabet$.\footnote{We define $\alphabet^0 \defeq \{\varepsilon\}$.}
A probability distribution over $\alphabet^*$ is called a \defn{language model} (\defn{LM}).
The elements of $\alphabet$ may be characters, subword pieces, or words; the choice lies with the modeler.
Language models can be factored autoregressively by means of the chain rule of probability, i.e., for any string $\str = w_1 \cdots w_N \in \alphabet^*$, we can write
% \afra{I am not taking eos probability into account in the experiments. Especially when we compare to ancestral sampling, i am not sure if that makes sense}
\begin{equation}\label{eq:auto-regressive-lm}
p(\str) = p(\eos \mid \str) \prod_{n=1}^N p(w_n \mid \str_{<n}),
\end{equation}
where $\eos \not\in\alphabet$ is a distinguished end-of-sequence token and $\str_{<n}$ is the prefix of length $(n-1)$ of the string $\str$.
We define $\alphabeteos \defeq \alphabet \cup \{\eos\}$, and require the conditional distributions $p(\cdot \mid \str_{<n})$ to be defined over $\alphabeteos$.
While all language models can be factored autoregressively, not all conditionals $p(\cdot \mid \str_{<n})$ can be assembled into a language model.
In some cases, probability mass must be placed on infinite sequences \citep{du-2022}.
In this work, we assume working with \defn{tight} language models, i.e., that they indeed define valid probability distributions over $\alphabet^*$.

\subsection{Language Modeling with Embeddings} \label{sec:lm-embed}

Most neural language models make use of embeddings.
% ---especially, those based on neural networks. 
Moreover, in most language models, the weights are shared between the language model head and the embedding layer.
Such an embedding-based language model is defined as follows\looseness=-1
\begin{equation}\label{eq:embed-dist}
p(w_n \mid \str_{<n}) \defeq \frac{\exp \bv_{w_n} \cdot \enc(\str_{<n})}{\sum_{w \in \alphabeteos} \exp \bv_{w} \cdot \enc(\str_{<n})},
\end{equation}
where $\bv_w \in \R^d$ is the embedding of $w$ and $\enc : \Sigma^* \rightarrow \R^d$ is a real-valued encoding of the context.
Notably, the context embedding $\enc(\str_{<n}) \in \R^d$ is obtained by inputting the context $\str_{<n}$ into the language model, converting it to embeddings $\bv_{\str_{<n}}$, passing it through neural network layers, and extracting the encoding from the model at position $n-1$.
% , which results in an encoding $\enc(\str_{<n})$. 

Interestingly, one can lift an embedding-based language model to be a distribution over the set of \textbf{base embeddings}: $\base = \{\bv_w: w \in \alphabeteos\}$.
Specifically, we can associate a sequence of embeddings
$\bV = [\bv_1, \ldots, \bv_N]$ to any string $\str = w_1 \cdots w_N$.
Substituting each $w$ with the corresponding $\bv_w$, we can rewrite \Cref{eq:auto-regressive-lm} as
\begin{equation}\label{eq:v-dist}
p(\bV) = p(\bv_{\eos} \mid \bV) \prod_{n=1}^N p(\bv_{n} \mid \bV_{<n}).
\end{equation}
This transformation encodes a language model as a distribution over real-valued vectors.
However, $p(\bV)$ only places a positive probability on a countable set, and is zero everywhere else.\looseness=-1
% which is of measure zero.\looseness=-1
\subsection{Controlled Language Modeling with Embeddings}
In a controlled generation task, we are interested in a subset of strings $\str$ that have a target property $t$. Therefore, we want to model and sample from a conditional distribution $p(\str \mid \target)$, e.g., sample a sentence given a topic. Following Bayes' rule, one can write $p(\str \mid \target) \propto p(\target \mid \str)\, p(\str)$. Previous papers model $p(\target \mid \str)$ with an embedding-augmented classifier. Such a classifier receives embeddings $\bV$ associated with $\str$ and predicts the probability of the target $t$. Notably, if the classifier and the LM share the same base embeddings $\mathcal{B}$, the controlled LM can also be lifted as a distribution over the base embeddings
\begin{equation} \label{eq:controlled-prob-embed}
    p(\bV \mid \target) = \frac{1}{Z_t}\, p(\target \mid \bV)\, p(\bV),
\end{equation}
where $Z_t = \sum_{\bV} p(\target \mid \bV)\, p(\bV)$ is an intractable normalization factor, that sums over the embeddings of all possible strings.\footnote{We will discuss in \Cref{sec:normal} how gradient-based sampling can help to sample from this distribution without the need to compute $Z_t$.} 
Identically to $p(\bV)$, $p(\bV \mid \target)$ only places a positive probability on a countable set.\looseness=-1

\section{Voronoi Measures}
In this section, we demonstrate how to encode an embedding-based language model as a density that places positive probability on a set with a measure greater than zero.
Such encoding allows us to derive a principled gradient-based sampling approach to generate samples in \Cref{sec:rhmc}.
We start with some definitions.\looseness=-1
\begin{defin}\label{def:ea-def}
An \defn{embedding-augmented} probability distribution over the first $M$ positive integers $[M]$ is an array
$\vp = [p_1, \ldots, p_M]$ such that
$p_m \geq 0$ and $\sum_{m=1}^M p_m = 1$ where we assume that there is a real-valued embedding $\{\bv_m\}_{m=1}^M \subset \R^d$ associated with each $m \in [M]$.\looseness=-1
\end{defin}
Embedding-augmented distributions can be viewed as densities over $\R^d$ using the following simple encoding
\begin{equation}\label{eq:naive}
p(\vx) = \begin{cases}
p_m, & \textbf{if } \vx = \bv_m \\
0, & \textbf{otherwise}.
\end{cases}
\end{equation}
\Cref{eq:naive}, however, yields a density that is $0$ almost everywhere (with respect to the standard Lebesgue measure) and its gradient with respect to $p_m$ is also zero almost everywhere.
Thus, \Cref{eq:naive} is not amenable to gradient-based sampling, and to derive a meaningful gradient-based sampling we require a more nuanced encoding.\looseness=-1

To provide such an encoding, we introduce the \defn{Voronoi measure}.
Given an embedding-augmented distribution $\vp = [p_1, \ldots, p_M]$ with embeddings $\{\bv_m\}_{m=1}^M$, and a compact set $\compactset \subset \R^d$ that covers the embeddings, i.e., $\{\bv_m\}_{m=1}^M \subset \compactset$, we define the \defn{Voronoi cell} for the $m^\text{th}$ item with respect to the compact set $\compactset$ as follows\looseness=-1
\begin{equation}\label{eq:voronoi-cell}
\Cm = \Big\{ \vx : \vx \in \compactset,
||\vx - \bv_m||_2^2 \leq ||\vx - \bv_{m'}||_2^2, \forall m' \neq m\Big\}.
\end{equation}
Now, using the definition of a Voronoi cell $C_m$ given in \Cref{eq:voronoi-cell}, we can define a density that is \emph{not} zero almost everywhere as follows.
The strategy is to spread out the probability mass $p_m$ over the entirety of the set $C_m$.
To do so, we assume access to a set of \defn{base measures} $\{\measure_m\}_{m=1}^M$ that give us a reference for how to judge the probability mass in each $C_m$.
We make this encoding formal in the following definition.
\begin{defin}\label{def:voronoi-measure}
Let $\vp = [p_1, \ldots, p_M]$ be an embedding-augmented distribution with embeddings $\{\bv_m\}_{m=1}^M \subset \R^d$, and let $\compactset$ be a compact set such that $\{\bv_m\}_{m=1}^M \subset \compactset$.
Furthermore, let $\{\measure_m\}_{m=1}^M$ be a set of base measures over $\R^d$ that are absolutely continuous with respect to the standard Lebesgue measure $\lambda$ over $\R^d$, i.e., $\measure_m\ll\lambda$.
Define the $(\compactset, \mu)$-\defn{Voronoi measure} as follows
\begin{equation}\label{eq:voronoi-measure}
\pvoronoi(\vx) \defeq \begin{cases}
\frac{p_{\mmax(\vx)}}{\measure_m(C_{\mmax(\vx)})}\frac{\dd\measure_m}{\dd\lambda}(\vx),  & \textbf{if } \vx \in \compactset \\
0, & \textbf{otherwise}
\end{cases}
\end{equation}
where we define projection
\begin{equation}
\mmax(\vx) \defeq \argmin_{m \in [M]} ||\vx -\bv_m||_2^2.
\end{equation}
with ties broken arbitrarily.\footnote{The set of $\vx$ that induces a tie is a set of measure zero under the Lebesgue measure.}
% \ryan{
% See\url{https://mathoverflow.net/questions/979/algorithm-for-finding-the-volume-of-a-convex-polytope}

% See \url{https://proceedings.mlr.press/v151/chevallier22a/chevallier22a.pdf} for how to get the area of a cell.

% See 
% \url{https://www.codewars.com/kata/58bf8041aa4014a09e000013}

% See
% \url{https://stackoverflow.com/questions/13882225/compute-the-size-of-voronoi-regions-from-delaunay-triangulation}
% }
\end{defin}
\begin{wrapfigure}[17]{r}{0.5\textwidth}
    \vspace{-20pt}
    \centering
    \includegraphics[width=0.5\textwidth]{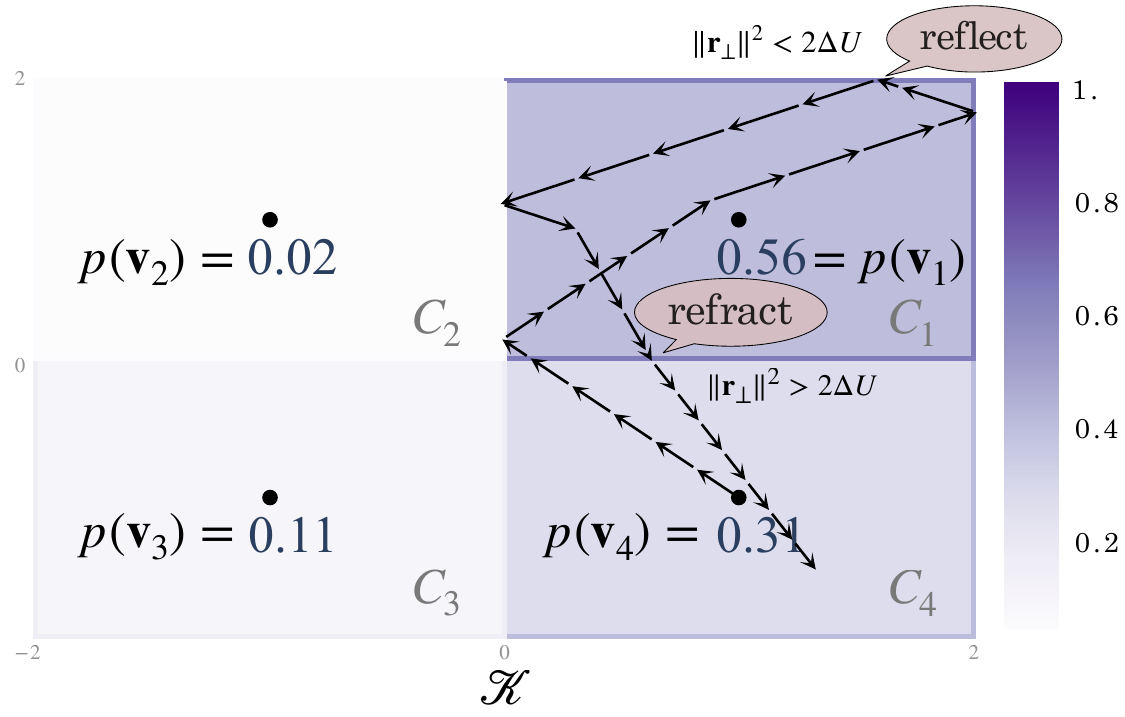}
    \caption{The example shows how Voronoi Sampling navigates through the space to sample one embedding in $\R^2$. Each Voronoi is annotated with the probability of its center, i.e., Voronoi measure of the cell.}
    \label{fig:toy}
\end{wrapfigure}
In the following proposition, we make precise the sense in which a Voronoi measure encodes the original embedding-augmented distribution.
% \clara{would be nice to add axis labels here! \response{afra}{clara's comment is on the figure, had to move it down a bit}}
\begin{restatable}{prop}{sumToOne}\label{prop:sum-to-one}
Let $\vp = [p_1, \ldots, p_M]$ be an embedding-augmented distribution with embeddings $\{\bv_m\}_{m=1}^M \subset \R^d$, and let $\pvoronoi$ be the corresponding Voronoi measure \Cref{eq:voronoi-measure}.
Then, $\pvoronoi(\Cm) = p_m$ where $\Cm$ is defined as in \Cref{eq:voronoi-cell}. See \Cref{sec:sum-to-one} for proof.
\end{restatable}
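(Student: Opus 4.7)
The plan is to unwind the definitions and reduce the claim to the defining property of the Radon--Nikodym derivative. The key observation is that for $\vx \in C_m$, the projection $\mmax(\vx)$ equals $m$ by construction (up to the measure-zero tie set), so on $C_m$ the density \Cref{eq:voronoi-measure} simplifies to the constant-times-density expression $\frac{p_m}{\measure_m(C_m)} \frac{\dd \measure_m}{\dd \lambda}(\vx)$. The rest of the argument is essentially one line of integration.

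First, I would remark that since the ties in $\mmax$ form a $\lambda$-null set (as stated in the footnote), and since each $\measure_m \ll \lambda$, this null set is also $\measure_m$-null; hence it contributes nothing to any integral and we may safely take $\mmax(\vx) = m$ for $\vx \in C_m$. Second, using the definition of $\pvoronoi$ as a density with respect to $\lambda$, I would write
\begin{equation}
\pvoronoi(\Cm) \;=\; \int_{\Cm} \pvoronoi(\vx)\,\dd\lambda(\vx)
\;=\; \frac{p_m}{\measure_m(\Cm)} \int_{\Cm} \frac{\dd\measure_m}{\dd\lambda}(\vx)\,\dd\lambda(\vx),
\end{equation}
pulling the constant $p_m / \measure_m(\Cm)$ out of the integral. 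Third, the defining property of the Radon--Nikodym derivative gives $\int_{\Cm} \frac{\dd\measure_m}{\dd\lambda}(\vx)\,\dd\lambda(\vx) = \measure_m(\Cm)$, and the factors cancel to leave $p_m$, as desired.

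The only substantive concern is well-definedness: the expression requires $\measure_m(\Cm) > 0$, which implicitly needs $\Cm$ to have positive Lebesgue measure (so that $\measure_m(\Cm) > 0$ is achievable given $\measure_m \ll \lambda$). Since $\bv_m \in \compactset$ and $\compactset$ is the union of the $\Cm$'s, in nondegenerate cases each cell is a full-dimensional convex polytope (a standard fact about Voronoi partitions when the sites are distinct), so this holds; I would note this briefly rather than belabor it. No step is genuinely hard---the content of the proposition is really the observation that spreading $p_m$ over $\Cm$ according to the normalized restriction of $\measure_m$ preserves the total mass on each cell by design.
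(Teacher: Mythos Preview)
Your proposal is correct and follows essentially the same approach as the paper's proof: both compute $\pvoronoi(\Cm)$ by integrating the density over $\Cm$, pulling out the constant $p_m/\measure_m(\Cm)$, and invoking the defining property of the Radon--Nikodym derivative to recover $\measure_m(\Cm)$ and cancel. Your additional remarks on the tie set being $\measure_m$-null and on the positivity of $\measure_m(\Cm)$ are careful observations that the paper's proof leaves implicit, but the core argument is identical.
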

% \begin{proof}
% See \Cref{sec:sum-to-one}.
% \end{proof}
\begin{myexample} \label{ex:toy}
Suppose $\vp = [p_1, \ldots, p_4]$ is a categorical distribution, and there are 4 embeddings in $\R^2$ associated with each $p_i$, namely: $\bv_1 = [1, 1], \bv_2=[-1, 1], \bv_3=[-1, -1], \bv_4=[1, -1]$. Given the $\compactset = [-2, 2] \times [-2, 2]$ and the embedding-augmented probability distribution $\vp$, \Cref{eq:voronoi-measure} defines a Voronoi measure over this space, where the Voronoi cells are visualized in \Cref{fig:toy}. We will discuss in \Cref{sec:rhmc} how Voronoi sampling navigates this space. 
\end{myexample}
% In \cref{sec:discont}, we characterize where the Voronoi measure is differentiable with respect to its argument $\vx$.
% This is necessary as we will need to compute such a gradient in order to derive a gradient-based sampling scheme in \Cref{sec:rhmc}.

%\subsection{Gradient}

%\begin{align}
%\nabla_{\vx}& \log_{\vx} \pvoronoitilde = 
%\nabla_{\vx} \left(\log \frac{w_{\nmax(\vx)}}{\lesbegue(V_{\nmax(\vx)})} \right) \\
%&= \nabla_{\vx} \log w_{\nmax(\vx)} - \nabla_{\vx}\log\lesbegue(V_{\nmax(\vx)}) \\
%&= \nabla_{\vx} \log \ell(\argmin_{n \in [N]} ||\vx - \vh_n||_2^2) - \nabla_{\vx}\log\lesbegue(V_{\nmax(\vx)}) 
%\end{align}

%I want to compute
%\begin{equation}
%\nabla_{\vx} \log \ell(\argmin_{n \in [N]} ||\vx - \vh_n||_2^2)
%\end{equation}
%Define the argmin to be $\vh(\vx)$. 
%\begin{align}
%\frac{\nabla_{\vh} \ell(\vh(\vx))}{\ell(\vh(\vx))} \nabla_{\vx}\vh(\vx)
%\end{align}
%Now, we need to discuss how to compute
%\begin{align}
%\nabla_{\vx}\vh(\vx)
%\end{align}
%We start by looking at the first-order conditions of the argmin, i.e.
%\begin{equation}
%\frac{1}{2}\left(\vx - \vh\right) = 0
%\end{equation}

\subsection{Structured Voronoi Measures}%\clara{this is not the most intuitive part to me and if there's space, I think it would benefit from some elaboration/example. Its hard to think about such high dimensional spaces. That said, it would be useful to be explicit here with something like 'A structured Voronoi measure is a density over a compact set $\compactset \subset \R^{N \times d}$'}
To encode language models as densities more naturally,
% In order to more naturally encode language models as densities, 
we introduce a generalization of the Voronoi measure, which we term a structured Voronoi measure.
Now, rather than a distribution over $M$ elements, we assume to have a sequence of length $N$.
Each token in the sequence takes value in $[M]$.\clara{is this standard measure theory jargon? I don't know what it means to 'take the [M]'}
Let $\bm = [m_1, \ldots, m_N] \in [M]^N$.
We define a \defn{structured Voronoi cell} as $C_{\bm} = \prod_{n=1}^N C_{m_n}$, where $\prod$ denotes the Cartesian product and we define the individual Voronoi cell as
\begin{equation}
C_{m_n} = \Big\{ \vx :  \vx  \in \compactset, ||\vx - \bv_{m_n}||_2^2 \leq ||\vx - \bv_{m'}||_2^2, \forall m' \neq m_n\Big\}. 
\end{equation}
\begin{restatable}{prop}{dpp}\label{prop:dp}
Let $\measure$ be a measure on $\R^d$.
Then, we have the product measure space as $\measure(C_{\bm}) = \prod_{n=1}^N \measure(C_{m_n})$. See \Cref{sec:dp} for proof.
\end{restatable}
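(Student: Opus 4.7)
The statement is essentially an instance of the fact that a product measure of a measurable rectangle factors as the product of the measures of its sides. My plan is to first unpack the notation: although $\mu$ is stated to live on $\R^d$, the set $C_{\bm}$ lives in $(\R^d)^N$, so the measure appearing on the left-hand side must be interpreted as the product measure $\mu^{\otimes N} = \mu \otimes \cdots \otimes \mu$ on $(\R^d)^N$. With that clarification, the statement reduces to the standard product-measure identity.

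First, I would verify that each Voronoi cell $C_{m_n}$ is $\mu$-measurable. This is immediate from its definition in Eq.~(7): it is the intersection of the compact set $\compactset$ with finitely many closed half-spaces of the form $\{\vx : \|\vx - \bv_{m_n}\|_2^2 \leq \|\vx - \bv_{m'}\|_2^2\}$, each of which is closed (in fact a linear half-space after expanding the squared norms). A finite intersection of Borel sets is Borel, hence $\mu$-measurable since $\mu$ is assumed to be absolutely continuous with respect to Lebesgue measure.

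Second, I would observe that, by definition, $C_{\bm} = \prod_{n=1}^N C_{m_n}$ is a measurable rectangle in the product space. The defining property of the product measure $\mu^{\otimes N}$ (obtained by iterating the construction from the Carathéodory extension theorem, or equivalently characterized via Fubini–Tonelli) is that it is the unique measure on the product $\sigma$-algebra satisfying
\begin{equation}
\mu^{\otimes N}(A_1 \times \cdots \times A_N) = \prod_{n=1}^N \mu(A_n)
\end{equation}
for all measurable $A_1, \ldots, A_N$. Applying this identity with $A_n = C_{m_n}$ yields $\mu^{\otimes N}(C_{\bm}) = \prod_{n=1}^N \mu(C_{m_n})$, which is the claim.

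There is no real obstacle here; the proof is essentially bookkeeping. The only subtlety worth flagging is the mild notational abuse in overloading the symbol $\mu$ to denote both the base measure on $\R^d$ and its $N$-fold product on $(\R^d)^N$. I would make this explicit in the write-up and then cite a standard reference (e.g., Folland's \emph{Real Analysis}) for the product-measure identity rather than re-deriving it.
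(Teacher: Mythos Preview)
Your proposal is correct and takes essentially the same approach as the paper: the paper's proof consists of exactly the two lines $\mu(C_{\bm}) = \mu\bigl(\prod_{n=1}^N C_{m_n}\bigr) = \prod_{n=1}^N \mu(C_{m_n})$, invoking the definition of $C_{\bm}$ followed by the product-measure identity for measurable rectangles. Your version is more careful (making the $\mu$ vs.\ $\mu^{\otimes N}$ overloading explicit and verifying Borel-measurability of the cells), but the underlying argument is identical.
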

% \begin{proof}
% See \Cref{sec:dp}.
% \end{proof}
\begin{defin} \label{def:svm}
Let $\vp$ be an embedding-augmented distribution over $[M]^N$.
For $\bm \in [M]^N$, we denote $\bm$'s probability as $p_{\bm}$, and $\bm$'s embedding as $\bV_{\bm} \in \R^{N \times d}$. 
Let $\compactset$ be a compact set that covers the embeddings $\bV_{\bm}$ and let $\mu \ll \lambda$ be a base measure absolutely continuous with respect to the Lebesgue measure $\lambda$.
We define the $(\compactset, \mu)$-\defn{structured Voronoi measure} as follows
\begin{equation}
\pvoronoi(\vx) \defeq 
\begin{cases}
\frac{p_{\bm^\star(\vx)}}{\mu\left(C_{\bm^\star(\vx)}\right)} \frac{\dd\mu}{\dd\lambda}(\vx), & \textbf{if } \vx \in \compactset \\
0,  & \textbf{otherwise}
\end{cases}
\end{equation}
where we define the structured projection
\begin{equation}
\bm^\star(\vx) \defeq \argmin_{\bm \in [M]^N} \sum_{n=1}^N || \vx_n - \bV_{\bm_n}||_2^2.
\end{equation}
\end{defin}
% \paragraph{Encoding a language model.} \ryan{TODO for Afra: I think this paragraph should be
% merged with the following section. It's on the same topic.}
\section{Application to Text Generation} \label{sec:voronoi-lm}
\Cref{def:svm} gives us the flexibility to use any probability distribution over sequences of embeddings to define a structured Voronoi measure. 
For example, one can substitute $p_{\bm^\star(\vx)}$ with an embedding-augmented LM, i.e., \Cref{eq:v-dist}, to encode a language model as a structured Voronoi measure. Another example is to encode a controlled LM as a structured Voronoi measure by substituting $p_{\bm^\star(\vx)}$ with \Cref{eq:controlled-prob-embed}.\looseness=-1
% This flexibility is particularly useful to define measures over modified LM distributions, as in controlled generation.
% \paragraph{Controlled Generation.}
% In a controlled generation task, a target $t$ is given and the desired distribution to sample from is the conditional distribution $p(\str \mid \target)$, e.g., sample a sentence given a topic. Note that following Bayes' rule, one can write $p(\str \mid \target) \propto p(\str)\, p(\target \mid \str)$. While $p(\str)$ is given by the language model, previous works model $p(\target \mid \str)$ with an embedding-augmented classifier. We discussed in \Cref{sec:lm-embed} how to lift an embedding-augmented LM to be a distribution over the set of base embeddings $p(\bV)$. The same can be done for the classifier probability $p(\target \mid \bV)$. Notably, we must make sure that the classifier and the LM share the same base embeddings $\mathcal{B}$. Then, in \Cref{def:svm} substituting $p_{\bm^\star(\vx)}$ with 
% \begin{equation}\label{eq:conditional-v}
%     p(\bV \mid \target) = \frac{1}{Z}\, p(\bV)\, p(\target \mid \bV)
% \end{equation}

% Encodes the desired conditional distribution as a structured Voronoi measure.
% Because it is not in general tractable to compute $Z$, one can not sample directly from this distribution. Fortunately, the gradient of $Z$ with respect to $\bV$ is zero, therefore, gradient-based sampling methods can be used to draw samples from this conditional distribution. 
\paragraph{Base Measure.} 
We have explained how to encode our desired distribution as a structured Voronoi measure.
However, in order to actually implement a gradient-based sampler, we need to specify the base probability measures $\{\mu_m\}_{m=1}^M$.
Given an embedding-augmented probability $p(\bV)$ it is natural to follow the gradient of $\log p(\bV)$ with respect to the word embedding, i.e., $\bg_{\bm} = \nabla_{\bV} \log p(\bV)$.
Thus, if we want to follow a direction similar to $\bg_{\bm}$, one natural choice for $\measure$ is a Gaussian measure centered at the gradient $\bg_{\bm}$ \emph{restricted} to Voronoi cell $C_{\bm}$,\footnote{Please refer to \Cref{sec:note} for a discussion on the reasoning behind choosing the Gaussian measure.} which we define:
\begin{equation} \label{eq:base-measure}
\measure(A) = \frac{1}{\mu(C_{\bm})} \int_{A\cap C_{\bm}} \exp\left(-\frac{1}{2}\|\bg_{\bm} - \vx\|_2^2\right) \dd\lambda(\vx).
\end{equation}
The normalizer ensures the measure is a probability measure.
Furthermore, we have that $\measure$'s Radon--Nikodym derivative with respect to the Lebesgue measure $\lambda$ is given by \looseness=-1
\begin{equation}\label{eq:gaussian-measure}
\frac{\dd\measure}{\dd\lambda}(\vx) \defeq
\begin{cases}
\frac{1}{\mu(C_{\bm})}\exp\left(-\frac{1}{2}||\bg_{\bm} - \vx||_2^2\right), &\textbf{if } \vx \in C_{\bm} \\
0, & \textbf{otherwise}
\end{cases}
\end{equation}
\Cref{eq:gaussian-measure} should be recognizable as the standard Gaussian density, albeit one that is truncated to the Voronoi cell $C_{\bm}$ \citep{gaussian-convex}.
\begin{restatable}{prop}{cont}\label{prop:cont}
\Cref{eq:gaussian-measure} is absolutely continuous with respect to the Lebesgue measure $\lambda$. See \cref{sec:cont} for proof.
\end{restatable}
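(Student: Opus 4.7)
The plan is to apply the definition of absolute continuity directly: recall that $\measure \ll \lambda$ means every $\lambda$-null set is also $\measure$-null. Since \Cref{eq:base-measure} defines $\measure(A)$ as the integral of a non-negative function against $\lambda$, the conclusion should follow immediately from monotonicity of the Lebesgue integral, so the bulk of the work is simply unwinding the definitions.

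First, I would verify that the construction is well-posed by showing the normalizer $\int_{C_{\bm}} \exp(-\tfrac{1}{2}\|\bg_{\bm} - \vx\|_2^2) \dd\lambda(\vx)$ is strictly positive and finite. Positivity uses the fact that the integrand is strictly positive and that each $C_{\bm}$ has positive Lebesgue measure (since the embeddings are distinct and $\compactset$ contains a neighborhood of $\bv_{\bm}$), while finiteness follows because the integrand is bounded by $1$ and $C_{\bm} \subseteq \compactset$ is bounded. Then, given any Borel set $A$ with $\lambda(A) = 0$, monotonicity gives $\lambda(A \cap C_{\bm}) = 0$, and since the Lebesgue integral of any measurable function over a $\lambda$-null set vanishes, \Cref{eq:base-measure} yields $\measure(A) = 0$, establishing $\measure \ll \lambda$.

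An alternative, slightly slicker phrasing is to appeal directly to the converse of the Radon--Nikodym theorem: \Cref{eq:gaussian-measure} explicitly exhibits a non-negative, $\lambda$-measurable function $f$ such that $\measure(A) = \int_A f \,\dd\lambda$ for all Borel $A$, which is precisely the characterization of $\measure \ll \lambda$. Either route makes clear that there is no serious obstacle; the only subtlety worth flagging is the well-posedness of the normalizer, which is really a nondegeneracy check on the Voronoi cell rather than a difficulty in the absolute continuity argument itself.
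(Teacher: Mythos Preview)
Your proposal is correct and takes essentially the same route as the paper: both verify $\measure \ll \lambda$ directly from the definition by showing that any $\lambda$-null Borel set $E$ satisfies $\measure(E)=0$. The paper's version decomposes $E$ as $(E\setminus\compactset)\cup\bigcup_m (E\cap C_m)$ and then appeals to the absolute continuity of the untruncated Gaussian on each piece, whereas you argue slightly more directly from the integral representation (and add a well-posedness check on the normalizer that the paper omits), but the substance is the same.
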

% \begin{proof}
% See \cref{sec:cont}.
% \end{proof}
\begin{restatable}{prop}{grad}\label{prop:grad}
The gradient of the log of the Voronoi measure $\pvoronoi$ is given by
\begin{equation}
\nabla_{\vx} \log \pvoronoi(\vx) = \begin{cases} \bg_{\bm} - \vx, & \textbf{if } \vx \in \interior(C_{\bm}) \\
\text{undefined}, & \textbf{if } \vx \in \partial C_{\bm} \\
\mathbf{0}, & \textbf{otherwise}
\end{cases}
\end{equation}
where the first two blocks in the case statement apply if there exists some $\bm$ such that $\vx \in \interior(C_{\bm})$ or $\vx \in \partial C_{\bm}$. See \Cref{sec:grad} for proof.
\end{restatable}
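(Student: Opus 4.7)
The plan is a direct calculation from the definitions, hinging on the fact that the structured projection $\bm^\star(\cdot)$ is \emph{locally constant} on the interior of every cell. Concretely, for any $\vx_0 \in \interior(C_{\bm})$ the definition of interior supplies an open neighborhood $U \ni \vx_0$ with $U \subset \interior(C_{\bm})$, and on $U$ we have $\bm^\star(\vx) = \bm$; thus the quantities $p_{\bm^\star(\vx)}$, $\mu(C_{\bm^\star(\vx)})$, and $\bg_{\bm^\star(\vx)}$ all reduce to fixed constants throughout $U$. This collapses the gradient computation down to differentiating a single smooth Gaussian factor in $\vx$.

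For the interior case, substituting the Radon--Nikodym derivative from \Cref{eq:gaussian-measure} into \Cref{def:svm} yields, for $\vx \in U$,
\[
\pvoronoi(\vx) \;=\; \frac{p_{\bm}}{\mu(C_{\bm})^2}\,\exp\!\left(-\tfrac{1}{2}\|\bg_{\bm} - \vx\|_2^2\right),
\]
so $\log \pvoronoi(\vx)$ equals a constant in $\vx$ plus $-\tfrac{1}{2}\|\bg_{\bm} - \vx\|_2^2$. Differentiating the quadratic term with respect to $\vx$ immediately gives the advertised $\bg_{\bm} - \vx$.

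For the boundary case, a point $\vx \in \partial C_{\bm}$ lies on an equidistant hyperplane between $\bv_{\bm_n}$ and some competing embedding in at least one coordinate block $n$. Any neighborhood of $\vx$ meets both $\interior(C_{\bm})$ and $\interior(C_{\bm'})$ for some $\bm' \neq \bm$, on which $\pvoronoi$ is given by two Gaussians centered at the generically distinct points $\bg_{\bm}$ and $\bg_{\bm'}$ and scaled by generically distinct constants. This produces a jump discontinuity of $\log \pvoronoi$ across $\partial C_{\bm}$, so no classical gradient exists. Finally, the ``otherwise'' case corresponds to $\vx \notin \compactset$: since the cells together with their boundaries cover $\compactset$, outside $\compactset$ we have $\pvoronoi \equiv 0$ on an open neighborhood of $\vx$, so $\log \pvoronoi$ is locally the constant $-\infty$ and we take its gradient to be $\mathbf{0}$ by convention.

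The only delicate point is the boundary case, but since $\partial C_{\bm}$ is a finite union of equidistant hyperplanes---hence Lebesgue-null---the undefined gradient creates no real obstacle for the HMC scheme developed in \Cref{sec:rhmc}, which handles these discontinuities via refraction.
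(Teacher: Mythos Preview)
Your proposal is correct and follows essentially the same approach as the paper's own proof: both argue by direct computation, using that the projection $\bm^\star(\cdot)$ is locally constant on $\interior(C_{\bm})$ so that only the Gaussian quadratic term survives under differentiation, that $\pvoronoi$ is discontinuous across $\partial C_{\bm}$, and that $\pvoronoi\equiv 0$ outside $\compactset$. Your write-up is slightly more explicit than the paper's (e.g., spelling out the local-constancy argument and the ``by convention'' treatment of $\log 0$), but there is no substantive difference in method.
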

% \begin{proof}
% See \Cref{sec:grad}.\afra{I changed $m$s to bolded $\bm$, if that's fine we need to change the appendix too}
% \end{proof}

\section{Gradient-Based Sampling}
In this section, we first review gradient-based sampling methods, namely \hmc and Langevin dynamics. Then we discuss how they have been used in previous papers on text generation. This will set the stage for our algorithm in \Cref{sec:rhmc}, which is based on \hmc.\looseness=-1
% We first overview the vanilla algorithms and then discuss their generalization to the discontinuous case, as the Voronoi measure requires.

\subsection{Hamiltonian Monte Carlo} \label{sec:hmc}
The goal of \hmc, originally proposed by \citet{duane}, is to design a better proposal distribution in the standard Metropolis--Hastings MCMC by taking advantage of the gradient information in a principled way. 
% As detailed in \citep{hmc}, the key idea is to leverage several useful properties of Hamiltonian mechanics, which are time-reversibility, energy conversation, and volume preservation.
Concretely, to sample from a given distribution $p(\vx)$ where $\vx\in\R^d$, \hmc treats $\vx$ as the coordinates of the particles in some fictitious physical system. It then introduces an auxiliary momentum variable $\vr\in\R^d$ associated with each coordinate and defines a Hamiltonian function $H(\vx,\vr)$. Here, the Hamiltonian $H$ has the intuitive physical interpretation of the total energy of some conservative system, and, in classical mechanics, decomposes into the potential energy $U(\vx)$ and the kinetic energy $K(\vr)$, i.e., $H(\vx,\vr)=U(\vx)+K(\vr)$.
This formulation is convenient in part because if we define the joint distribution $p(\vx,\vr)\propto e^{-H(\vx,\vr)}$ as in energy-based models, then
\begin{equation}
p(\vx,\vr)\propto e^{-U(\vx)}\cdot e^{-K(\vr)},
\end{equation}
which means we can treat $\vx$ and $\vr$ as independent variables. Naturally, we can let $U(\vx)=-\log p(\vx)$ so that $\vx$ has the marginal of the target distribution. It is also common practice to set $K(\vr)=\vr^\top M^{-1}\vr/2$ so that the momentum variable has a Gaussian distribution. Here, $M \in \R^{d\times d}$ is called the \textbf{mass matrix} and commonly set to identity.\footnote{There exist sophisticated methods \citep{nuts} to tune the mass matrix, but given that LM gradients are expensive to compute, we will not attempt such methods in this paper.}
\begin{wrapfigure}{R}{0.5\textwidth}
\begin{minipage}{0.5\textwidth}
   \input{algs/hmc.tex} 
\end{minipage}
\end{wrapfigure}

The Hamiltonian $H$ determines the equations of motion in a physical system, given by the Hamiltonian equations, which is also known as Hamiltonian dynamics,
\begin{align}\label{eq:hamil-eqns}
\frac{\dd\vx}{\dd t}=\frac{\partial H}{\partial \vr},\quad \quad \frac{\dd\vr}{\dd t} =-\frac{\partial H}{\partial \vx}.
\end{align}

We are now ready to give a high-level description of how \hmc generates a single sample (see \Cref{alg:hmc}): First sample a momentum $\vr$ from a Gaussian (\cref{alg-line:sample-gaussian}), then evolve the system using the Hamiltonian equations \Cref{eq:hamil-eqns} for some predetermined amount of time (\crefrange{alg-line:leapfrog-for-loop}{alg-line:leapfrog-step-3}), and finally accept the new state with the Metropolis--Hastings acceptance criterion (\crefrange{alg-line:metropolis-start}{alg-line:metropolis-end}).
% \ryan{I would collapse this to be in line so it fits.}
Note that the Hamiltonian equations often admit no closed-form solution in practice, and hence one needs to use numerical integrators for approximation. In particular, the leapfrog integrator, corresponding to \crefrange{alg-line:leapfrog-step-1}{alg-line:leapfrog-step-3}, is almost always used in \hmc.

Ingeniously designed by \citet{duane}, the efficiency of this elegant procedure is a result of several favorable properties of Hamiltonian mechanics---namely volume preservation, reversibility, and conservation. 
Upon exact simulation of the Hamiltonian dynamics, these properties will lead to the acceptance probability of one, i.e., every proposed move will be accepted.  In \Cref{sec:hmc-prop}, we will give intuition about these properties.
% as to why this procedure is sampling from the correct distribution and why it might have superior performance in many situations. 
Since the method developed later in \Cref{sec:rhmc} will subsume \hmc as a special case, we delay the proof of correctness until then.\looseness=-1

\paragraph{Langevin Dynamics.} 
Langevin dynamics is a simplification of \hmc, where only one leapfrog step is performed. Furthermore, what is usually referred to as Langevin dynamics is in fact the \emph{uncorrected} Langevin dynamics, where the acceptance criterion is ignored, i.e., every proposed sample is accepted; see \Cref{alg:langevin}.
While uncorrected Langevin dynamics is guaranteed to converge when the energy function is smooth \citep[\S 5.3]{neal1993probabilistic}, there is no such guarantee with the presence of a discontinuity in the energy function. Nevertheless, due to its simplicity, Langevin dynamics is the only gradient-based sampling method that has been applied for text generation applications. 

\subsection{Applications to Controlled Generation} \label{sec:normal}
One of the primary advantages of using gradient-based techniques for controlled generation is that it provides a means to sample from the conditional distribution \cref{eq:controlled-prob-embed} without having to calculate the normalization factor $Z_t$. For example in \hmc algorithm (\Cref{alg:hmc}), all we need to calculate regarding the potential energy $U(\bV) = -\log p(\bV \mid t)$ is two terms: (1) the gradient of $U$ with respect to $\bV$: $\nabla U$, and (2) the difference between the potential energy of two points $\Delta U$ for the Metropolis criterion. Fortunately, both terms are independent of $Z_t$, and we can sample from the conditional distribution without the need to compute $Z_t$.

% An intuitive explanation for this is that the total number of rejections will be proportional to the leapfrog step size $\varepsilon$ as the algorithm explores the state space. Thus, for a small enough step $\varepsilon$, the probability of any proposal being rejected will be negligible, and we can ignore the acceptance criterion in the first place \citep[\S 5.3]{neal1993probabilistic}. 
% The uncorrected Langevin dynamics has become a popular method of choice due to its simplicity compared to \hmc. However, the convergence guarantees for this algorithm depend on the smoothness of the energy function, which is often violated in text generation applications.} \looseness=-1

\paragraph{\mucola.} %\citet[\mucola]{kumar-2022}
As defined in \Cref{eq:controlled-prob-embed}, $p(\bV \mid \target)$ is $\Rd$-valued, so it is tempting to apply a gradient-based sampling technique to sample from it. And, indeed, \citet{kumar-2022} have proposed such a scheme based on Langevin dynamics. They define the potential energy of the embeddings as $U(\bV) \defeq -\log p(\bV \mid \target)$ and apply Langevin dynamics out of the box; see \Cref{alg:mucola}. 
However, since $U(\bV)$ is zero for any vector other than vectors in $\base$ (defined in \Cref{sec:lm-embed}), they modify the sampling process and project the proposed sample to the base embeddings set at each step of the sampling process (\cref{alg-line:project} of the algorithm). 
The added projection step, however, is neither volume-preserving nor time-reversible. Hence, this sampling procedure does not sample from the intended distribution and is not a valid MCMC algorithm.
\paragraph{\cold and other schemes.} 
Alternative approaches have been proposed in the literature to reformulate a language model as potential energy over the logit \citep[\cold;][]{cold} or simplex space \citep{hoang-etal-2017-towards, kumar2021controlled}.
% , such as heuristic energy functions based on the logits of the language model \citep{cold} and simplex-based energy functions defined over the simplex of all possible words in the vocabulary \citep{hoang-etal-2017-towards, kumar2021controlled}. 
However, these formulations are not suitable for principled gradient-based sampling. \cold only employs a heuristic energy function to select among the candidate generations obtained via top-$k$ sampling, and the simplex-based approach requires an extra constraint to ensure the sample stays on the simplex.\looseness=-1 

\section{Structured Voronoi Sampling}\label{sec:rhmc}
Given our structured Voronoi measure $\pvoronoi$, one can apply \hmc to sample from it. In this section, we take one step further and propose a variation of \hmc that is more suitable to sample from $\pvoronoi$.
Importantly, $\pvoronoi$ contains discontinuities whereas the generic leapfrog integrator does not account for
 such sudden jumps in the potential function. In other words, even if the leapfrog integrator itself is volume preserving and time-reversible,
 the sudden jumps in potential can lead to large deviations in the Hamiltonian value, causing a low acceptance rate. We therefore would like to find an alternative to leapfrog in such situations. \looseness=-1

\paragraph{A Physical Analogy.} In classical mechanics, discontinuity with smooth boundary in the potential function occurs naturally, e.g., in collision dynamics or a slab magnetic field, and is referred to as a potential barrier (or interface). Upon encountering a potential barrier, a particle will either be reflected from or transmitted through the barrier surface, depending on whether it has enough kinetic energy to overcome the potential jump (e.g., \cite[\S4.6.2]{finn-2010}). 
Such behavior is similar to reflection--refraction phenomenon in optics. The key insight here is that, in both cases, the Hamiltonian is conserved.\looseness=-1 

\paragraph{Reflection and Refraction.} To give a precise mechanical description of this behavior, suppose a particle encounters a potential barrier at position $\vx$ with momentum $\vr$. We can decompose momentum as $\vr=\vr_\perp+\vr_\parallel$ where $\vr_\perp$ is normal to the barrier and $\vr_\parallel$ parallel to it. 
Let $\Delta U$ be the signed potential energy difference between the two sides of the barrier. 
If $\|\vr_\perp\|_2^2>2\Delta U$, then the particle has enough kinetic energy to overcome the barrier, and its momentum's normal component will instantaneously become $\vr'_\perp=\sqrt{\|\vr_\perp\|_2^2 - 2 \Delta U} \cdot \frac{\vr_\perp}{\|\vr_\perp\|_2^2}$ after being transmitted through the barrier (refraction). 
Otherwise, if $\|\vr_\perp\|_2^2\leq 2\Delta U$, the particle will be reflected from the barrier and the normal component will instantaneously become $\vr'_\perp=-\vr_\perp$. 
We show in \Cref{sec:vs-hamil-conserve} that Hamiltonian is conserved in either case. The reflect--refract process is summarized in \Cref{alg:rhmc-refractreflect}.\looseness=-1

\subsection{A Sampling Algorithm}

Noting that the discontinuity surfaces of $\pvoronoi$ are all piecewise smooth, we can build on the above and develop a sampling algorithm for $\pvoronoi$ to handle discontinuity in a principled and effective way. 
In fact, we only need to make one change to the generic \hmc, which is updates
 $(\vx,\vr)$ according to the mechanical description given above. 
Concretely, we need to replace step 2 in the \hmc outline in \Cref{sec:hmc}: When a single step of leapfrog encounters no discontinuity, we may advance to the next point as in \hmc; however, when there is discontinuity, if a full leapfrog step is taken, we need to proceed by repeatedly computing where the discontinuity is encountered, taking a smaller step up to the discontinuity and refracting--reflecting based on the potential energy difference. This process is continued until we have exhausted the step size. Since refracting--reflecting conserves Hamiltonian (\Cref{sec:vs-hamil-conserve}), this process yields a better acceptance rate in the presence of a discontinuity. See \Cref{alg:rhmc} for the details of this sampling procedure, and \Cref{sec:find-discont} for how to efficiently find discontinuities. We will supply proof of correctness in \Cref{sec:rhmc-correctness-proof}.\looseness=-1

\paragraph{A note on calculating the base measure.} To adjust the momentum, one needs to compute $\Delta U$, which implies computing the difference between two base measures, as defined in \Cref{eq:base-measure}. However, computing such an integral in a high-dimensional space is not practical. Therefore, make an assumption that the base measures of Voronoi cells are equal, and thus do not have an effect on $\Delta U$. However, such an assumption might not hold.
See \Cref{sec:limitations} for limitations.

\section{Experiments}
We empirically assess the performance of Voronoi sampling in a series of experiments. In each experiment, we perform a grid search to find the best set of hyperparameters, these experimental details can be found in \Cref{sec:exp-detail}. 
Our open-source implementation will be available upon publication.\looseness=-1
\subsection{Toy Example}
We first apply our Voronoi sampling\footnote{Note that in the case of the toy experiment, we are not sampling a \emph{sequence} of text, but rather a single embedding. To highlight this point, we use Voronoi sampling instead of \emph{structured} Voronoi sampling to refer to our algorithm in this section.} method on the toy example discussed earlier in \Cref{ex:toy}, where the reference probability distribution is tractable and known $p(\vx)$. The potential energy is then set to $U(\vx) = -\log \pvoronoi(\vx)$. 
% Note that in the discussed gradient-based sampling methods, we do \emph{not} need to compute the potential energy for a given $\vx$, but only require to calculate the gradient of $U$ with respect to $\vx$, and $\Delta U$ for metropolis criterion and the momentum adjustment. 
Importantly, the toy experiment is intentionally designed such that the base measure of all of the Voronoi cells is equal, therefore, we can safely ignore calculating the base measure and arrive at exact sampling methods. \looseness=-1

We compare Voronoi sampling to \mucola and \hmc. To make a fair comparison, we add the Metropolis criterion\footnote{We accept transitioning from $\vx^t$ to $\vx^{t+1}$ with probability $e^{H^t - H^{t+1}}$.} to the \mucola{} algorithm and only do one leapfrog step in all algorithms. Furthermore, to see the effect of the reference distribution on the performance of the sampling algorithm, we anneal this distribution with 6 temperatures, where the lower temperatures lead to peaky distributions and the higher temperatures to uniform-like distributions.\looseness=-1

We take $200$ samples after $500$ burn-in iterations, and compare the empirical distributions of samples to the reference by measuring the Jensen--Shannon (JS) divergence between the two. As results in \Cref{fig:toy-js} show, the JS divergence between the reference distribution and the empirical distribution of Voronoi samples is the smallest. The difference between the methods is more pronounced at lower temperatures, as the change in potential energy is greater, resulting in more errors in the leapfrog integrator. 
In \cref{sec:more_toy} we provide more empirical support that Voronoi sampling converges faster to the reference distribution, especially when we increase the dimensionality of the sampling space.\looseness=-1
\begin{figure}[t]
    \centering
    \begin{minipage}{.50\textwidth}
    \centering
    \includegraphics[width=\textwidth]{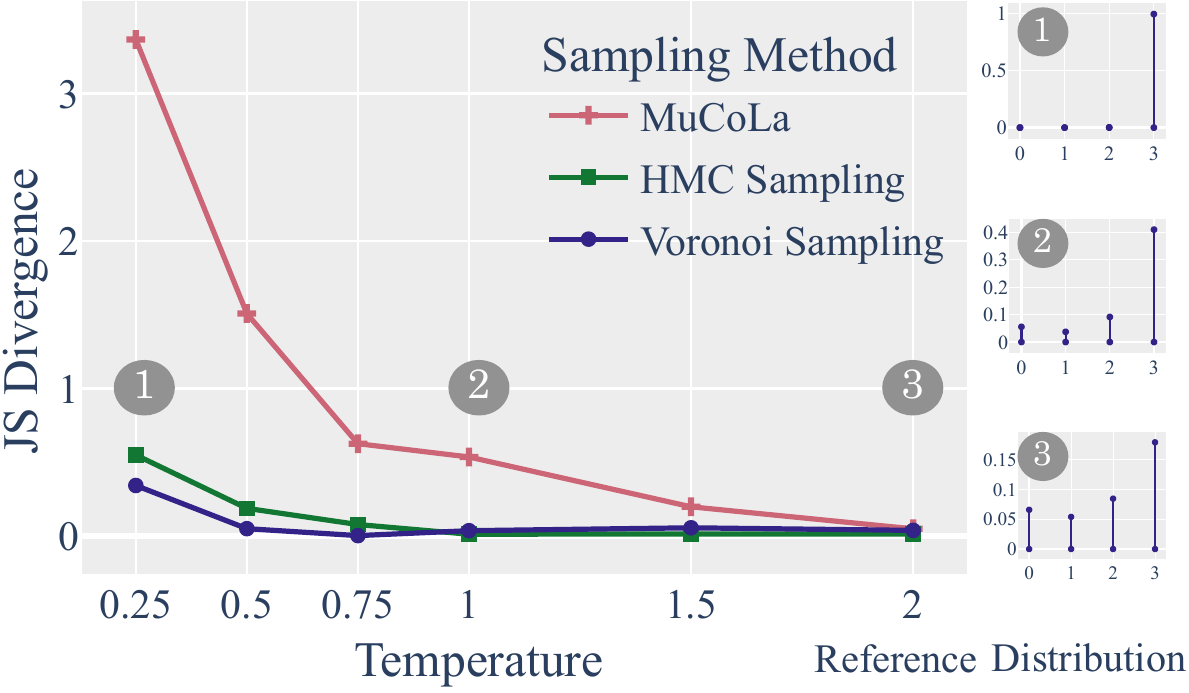}
    \caption{Left: JS divergence between the reference probability and empirical probability distribution. Voronoi Sampling clearly outperforms others in low temperatures. Right: reference probability distribution annealed with $3$ temperatures: $0.25$ (peaked), $1$, and $2$ (close to uniform).}
    \label{fig:toy-js}
    \end{minipage}\hspace{1cm}
    \begin{minipage}{.42\textwidth}
    \centering
    \includegraphics[width=\textwidth]{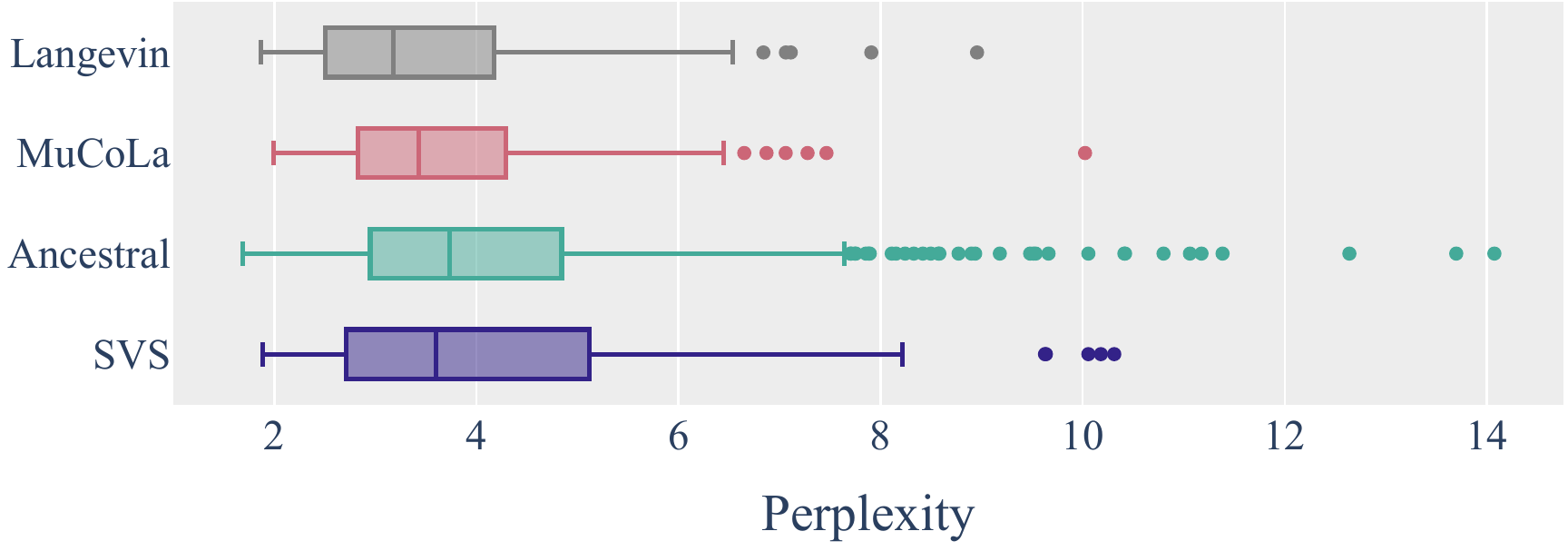}
    \caption{Perplexity of $100$ samples taken with different gradient-based algorithms, compared to $1000$ samples taken with the ancestral sampling (in green). While \langevin, \svs, and \mucola are comparably close to the ancestral samples' distribution, \svs models the tail of the distribution better.}
    \label{fig:lm}
    \end{minipage}
\end{figure}
\subsection{Sampling from Language Models}
Next, we apply our method to sample from a language model. The underlying LM is a finetuned GPT-2\footnote{We use \texttt{gpt2} checkpoint from the Huggingface library \citep{wolf-etal-2020-transformers}.} on E2E dataset \citep{novikova-etal-2017-e2e}; see \Cref{sec:dataset-stats} for dataset statistics.
As opposed to the previous experiment, the reference distribution of the LM is not tractable. Therefore, we use the empirical distribution of ancestral samples as an unbiased estimate of the reference distribution. Note that ancestral sampling incrementally draws samples from a language model, where at each step of the generation a token $w_n$ is sampled with the probability given by the LM: $p(w_n \mid \str_{<n})$. Therefore, the process can give unbiased estimates of the reference distribution.\looseness=-1

We follow \Cref{sec:voronoi-lm} to define a structured Voronoi measure $\pvoronoi$ using the LM probability. We then implement \svs, where the potential energy is set to $- \log \pvoronoi$. To empirically measure the benefit of reflection--refraction step in \svs, we compare it to applying Langevin dynamics directly to $\pvoronoi$. We implement $\mucola$ as a baseline, which writes the potential energy using an embedding-augmented LM, i.e., \Cref{eq:v-dist}.\looseness=-1

We show the distribution of samples' perplexity in \Cref{fig:lm}. The green trace is the empirical distribution of $1000$ ancestral samples. While all the sampling methods result in distributions comparably close to ancestral samples' distribution, we observe that \svs manages to model the tail of the distribution better. On the other hand, \mucola and \langevin tend to take samples from the mode of the distribution more often. 
\subsection{Controlled Generation}
Finally, we apply our structured Voronoi sampling to $2$ controlled generation task. The goal of the first task is to generate restaurant reviews for a target food type $\target$, e.g., Italian, Fast food, Japanese, etc. 
The goal of the second task is to control the sentiment of the generations to enforce a positive sentiment.
We train classifiers to predict the target $t$ (food type or positive sentiment) from the input sequence $p(\target \mid \str)$.\footnote{See \Cref{sec:exp-detail} for more experimental details about classifiers.} We implement two baselines:

\paragraph{\fudge.}\citet{yang-klein-2021-fudge} offer a heuristic approach to sample from the conditional distribution. They incrementally sample tokens under the language model. At each sampling step, they adjust the probabilities given by the LM, by feeding each candidate prefix to a classifier and obtain the probability of that prefix following the control target.

\paragraph{\mucola.}
\citet{kumar-2022} treat $p(\bV \mid \target)$, \Cref{eq:controlled-prob-embed}, as a distribution in $\R^d$ and apply Langevin dynamics directly to sample a sequence of embeddings $\bV$. The potential energy is defined as $-\log p(\bV \mid \target)$. When rewriting this potential energy with Bayes' rule, it has been shown empirically, that adding a hyperparameter $\gamma$ is helpful to keep the balance between the classifier and LM. Therefore, the final potential energy is defined as: 
\begin{equation}
    U(\bV) \defeq -\log p(\bV) - \gamma \log p(\target \mid \bV). 
\end{equation}
As mentioned earlier, $p(\bV \mid \target)$ only places a positive probability on a countable set. We, therefore, use \Cref{def:svm} to define structured Voronoi measures and set the potential energy to
\begin{equation}
    U(\vx) \defeq -\log \pvoronoi(\vx) - \gamma \log \pvoronoi(\target \mid \vx). 
\end{equation}
We then apply Langevin dynamics and \svs to sample according to this potential energy. 

\paragraph{Evaluation.} We sample $120$ sentences of length $20$\footnote{We sample $20$ sentences per control target.} and evaluate the generations on three metrics: 
\begin{itemize}[leftmargin=*]
    \item \textbf{Success:} is defined as the percentage of generations that adhere to the control target. To determine whether a generation conforms to the specified target we use an \emph{evaluator classifier}.
    \item \textbf{Fluency:} is measured by the mean (and $0.95$ confidence interval) of perplexity under the language model.
    \item \textbf{Diversity:} is measured by the mean number of distinct $n$-grams ($n=1, 2, 3$) in a set of 
    samples, normalized by the length of the sequence. 
\end{itemize}
\begin{table}[t]
\centering 
% \ra{1.3}
% \tabcolsep=0.11cm
\adjustbox{max width=\textwidth}{%
\begin{tabular}{@{}lcccccccccc@{}}\toprule
& \multicolumn{5}{c}{Topic Control} & \multicolumn{5}{c}{Sentiment Control} \\
\cmidrule(lr){2-6}
\cmidrule(lr){7-11}
&  Success$(\uparrow)$ & PPL$(\downarrow)$ & Dist-1$(\uparrow)$ & Dist-2$(\uparrow)$ & Dist-3$(\uparrow)$ & Success$(\uparrow)$ & PPL$(\downarrow)$ & Dist-1$(\uparrow)$ & Dist-2$(\uparrow)$ & Dist-3$(\uparrow)$ \\ \midrule
\textsc{GPT2} & $0.12, [0.08, 0.19]$ & $5.10, [4.8, 5.52]$ & $0.40$ & $0.56$ & $0.67$ &  
$0.55, [0.47, 0.63]$ & $21.33, [18.80, 30.86]$ & $0.40$ & $0.60$ & $0.71$\\
\fudge & $0.30, [0.25, 0.4] $ & $5.59, [5.13, 6.19]$ & $0.39$ & $0.55$ & $0.65$ 
& $0.57, [0.49, 0.65] $ & $24.27, [22.45, 26.45]$ & $0.40$ & $0.60$ & $0.70$\\ 
\mucola & $0.58, [0.46, 0.63]$ & $33.09, [17.43, 73.34]$ & $0.26$ & $0.40$ & $0.51$ 
& $0.66, [0.58, 0.73]$ & $85.74, [66, 116.4]$ & $0.28$ & $0.42$ & $0.53$\\
\midrule
\langevin & $0.91, [0.85, 0.95]$ & $14.26, [12.3, 18.2]$ & $0.24$ & $0.39$ & $0.51$ 
& $0.82, [0.75, 0.87]$ & $26.76, [24.67, 29.29]$ & $0.16$ & $0.30$ & $0.41$\\ 
\svs & $0.92, [0.86, 0.96] $ & $13.9, [12.62, 15.9]$ & $0.22$ & $0.37$ & $0.49$ 
& $0.84, [0.77, 0.89] $ & $32.73, [30.22, 35.6]$ & $0.14$ & $0.28$ & $0.41$\\
\bottomrule
\end{tabular}}
\caption{Evaluation of different sampling methods on controlled generation, using three criteria: success in following the control target (measured by the evaluator classifier), fluency (measured by perplexity), and diversity. }
\label{tab:control-results-avg}
\end{table}

As results in \Cref{tab:control-results-avg} show,\footnote{Please refer to \Cref{fig:control-food-complete} to see a visualization of the topic control results, and to \Cref{tab:control-results} for results per target type.\looseness=-1} \fudge tends to achieve the highest diversity, however, it fails to follow the control target.
% considerably more often than the other methods. 
\mucola either generates fluent results without paying enough attention to the control, or sacrifices fluency in favor of following the control target; thus, the high variance in success rates. Both \langevin and \svs result in a high success rate and maintain fluency and diversity, and \svs is effective in maintaining a balance between various metrics and producing fluent sentences that adhere to control targets.\looseness=-1

\section{Related Work}

\paragraph{Controlled Generation.}  Numerous approaches have been proposed to enforce controls during the text generation process \cite[][\emph{inter alia}]{krause-etal-2021-gedi-generative, liu-etal-2021-dexperts, selfdebiasing}. 
For example, weighted decoding \citep{ghazvininejad-etal-2017-hafez, holtzman-etal-2018-learning} scores each candidate token with a weighted sum of its score under the language model and its adherence to control targets, subsequently selecting candidates with the highest scores. \fudge method adopts a similar scoring function, resembling a Bayesian formulation for $p(\str \mid \target)$. After making simplifying assumptions and factorizing $p(\str \mid \target)$, \fudge samples tokens autoregressively based on their scores. More recently, a line of research attempts to directly sample from $p(\str \mid \target)$ by reformulating it as an energy-based model and sampling from it using efficient gradient-based sampling algorithms. As discussed in \cref{sec:normal} \cold reformulates $p(\str \mid \target)$ as an energy-based model on the logit space and uses that to select samples with high energy from a number of candidate generations. \mucola offers a sampling algorithm motivated by Langevin Dynamics that operates in the embedding space.

\paragraph{Gradient-based Sampling.}
Our work is closely related to the line of research that makes use of gradient information to sample from complex distributions \cite{duane,neal1993probabilistic,langevin}. 
Gradient-based samplers \cite{hmc,nuts} are shown to be highly effective when sampling from continuous distributions \cite{stan,bingham2018pyro,phan2019composable}. 
However, it is a difficult problem to adapt gradient-based samplers to discrete settings \cite{zhang-etal-2012-continuous-relaxations,pakman-paninski-nips2013}. 
More recently, several papers proposed promising gradient-based MCMC for discrete distribution that are reversible chains \cite{pmlr-v139-grathwohl21a,pmlr-v162-zhang22t,rhodes2022enhanced}. Our work instead formulates an irreversible Markov chain based on HMC. We leave it to future work to explore the utility of these recent papers on text generation.

\section{Conclusion}
In this work, we propose structured Voronoi sampling, a principled gradient-based sampling method for text generation. To formulate the energy function used in \svs, we define structured Voronoi measures on the embedding space and show how such measures can encode language models. In a controlled generation task, \svs outperformed other sampling methods in following the control target while producing comparably fluent and diverse samples. \looseness=-1

\section*{Broader Impacts}
It has been repeatedly shown that LMs can generate harmful, toxic, or non-factual content \citep{gehman-etal-2020-realtoxicityprompts, pagnoni-etal-2021-understanding, sheng-etal-2021-societal}. In fact, an application of the controlled generation scheme discussed in this paper could be to mitigate such issues. However, the same method could be used to generate misinformation, or toxic content intentionally. 

\section*{Acknowledgements}We thank Tim Vieira, Tiago Pimentel, and Clara Meister for their feedback on this work. We also thank the anonymous reviewers for their constructive feedback during the review process. Afra Amini is supported by ETH AI Center doctoral fellowship.

% Entries for the entire Anthology, followed by custom entries
\bibliography{anthology,custom}
\bibliographystyle{acl_natbib}

\clearpage
\newpage
\appendix
\onecolumn

\section{Limitations} \label{sec:limitations}
\paragraph{Approximating the base measure.} In this work, we go one step closer to implementing a principled approach for text generation. However, in our text generation experiments, we follow the previous work in using uncorrected Langevin dynamics. Moreover, when implementing \svs for text generation, we assume all Voronoi cells to have an equal base measure, which might not hold.
\paragraph{Efficiency.} As reported in \Cref{sec:exp-detail}, all gradient-based samplers are considerably slower than ancestral sampling and heuristics such as \fudge. Further efforts are needed to make gradient-based methods faster.
\paragraph{Text Quality.} As shown in previous work, sequences with high probability under the LM can be repetitive, dull, or degenerate \citep{holtzman_curious_2020}. Therefore, as with any other sampling method, \svs might sample degenerate sentences, and the quality of the samples depends on the LM probability distribution. This is an active area of research with various proposals from changing the loss function during training \citep{Welleck2020Neural}, to modifying the decoding objective \citep{meister+al.tacl22}.

\section{A Note on the Choice of the Gaussian Measure} \label{sec:note}
We thank the reviewers and the meta reviewer for their careful assessment of this work and their thoughtful feedback. 
As a response to an important point raised by the meta reviewer, here we explain the reasoning behind using a Gaussian measure in \Cref{eq:base-measure}. The gradient $\bg_{\bm}$ is only defined at Voronoi centers and the goal is to follow a direction similar to $\bg_{\bm}$ when we are in the vicinity of the Voronoi center, i.e., in the Voronoi cell. We note that this direction needs to be adjusted depending on the position in the Voronoi cell $\vx$. A natural choice for this would be to use a truncated Gaussian that is centered at $\bg_{\bm}$, which implies that the gradient in $\vx$ should be $\bg_{\bm} - \vx$, to follow a similar direction $\bg_{\bm}$ at the center of the Voronoi.
\section{Proofs}
\subsection{Proof of \Cref{prop:sum-to-one}} \label{sec:sum-to-one}
\sumToOne*
\begin{proof}
\begin{subequations}
\begin{align}
\pvoronoi(\Cm) &= \int_{\Cm} \pvoronoi(\vx) \dd\lambda \\
&= \int_{\Cm} \frac{p_m}{\measure(C_m)} \frac{\dd\mu}{\dd\lambda}(\vx) \dd\lambda \\
&= \frac{p_m}{\measure(C_m)} \int_{\Cm}  \frac{\dd\mu}{\dd\lambda}(\vx) \dd\lambda\\
&= \frac{p_m}{\measure(C_m)} \mu(\Cm) = p_m
\end{align}
\end{subequations}
\end{proof}

\subsection{Proof of \Cref{prop:dp}} \label{sec:dp}
\dpp*
\begin{proof}
Let $\bm = [m_1, \ldots, m_N] \in [M]^N$.
We have
\begin{subequations}
\begin{align}
\measure(C_{\bm}) &=\measure\left(\prod_{n=1}^N C_{m_{n}}\right)\\
&= \prod_{n=1}^N \measure(C_{m_{n}})
\end{align}
\end{subequations}
\end{proof}
\subsection{Proof of \Cref{prop:cont}} \label{sec:cont}
\cont*
\begin{proof}
Choose $E\in \mathscr{B}(\R^d)$ such that $\lambda(E)=0$.\footnote{We use $\mathscr{B}(\cdot)$ to denote the standard Borel $\sigma$-algebra.} Note that $E=(E\setminus \compactset) \cup \bigcup_{m=1}^M (E\cap C_m)$ where $\mu(E\setminus \compactset)=0$ by \cref{eq:gaussian-measure}. Since the Gaussian measure over $\R^d$ itself is absolutely continuous with respect to $\lambda$, we can also conclude that $\mu(E\cap C_m)=0$ for any $m$. Hence, $\mu(E)\leq \mu(E\setminus \compactset) + \sum_{m} \mu(E\cap C_m) = 0$ which means $\mu(E)=0$. So $\mu\ll\lambda$.
\end{proof}
\subsection{Proof of \Cref{prop:grad}} \label{sec:grad}
\grad*
\begin{proof}
We have three cases.
Suppose $\vx \in \interior(C_m)$ for some $m \in [M]$.
Then, by \Cref{obs:discont}, we have that $\pvoronoi(\vx)$, and, thus, $\log \pvoronoi(\vx)$ is differentiable.
Direct computation reveals:
\begin{subequations}
\begin{align}
&\nabla_{\vx} \log \pvoronoi(\vx) = \underbrace{\nabla_{\vx} \log \frac{p_m}{\measure(C_m)}}_{=0} + \nabla_{\vx} \log \frac{\dd\measure}{\dd\lambda}  \\
&= \nabla_{\vx} \log \exp\left(-\frac{1}{2}||\bg_m - \vx||_2^2\right) - \underbrace{\nabla_{\vx} \log \mu(C_m)}_{=0}  \nonumber \\
&= -\frac{1}{2} \nabla_{\vx} ||\bg_m - \vx||_2^2  \\
&= \bg_m - \vx
\end{align}
\end{subequations}
Next, suppose $\vx \notin \bigcup_{m=1}^M C_m$. 
Then, the measure is zero, so the gradient is as well.
Finally, we have that $\vx \in \partial C_m$ for some $m$.
Then, by the next Observation (\Cref{obs:discont}), we have that $\pvoronoi$ is discontinuous, so the derivative is not defined.
\end{proof}
\begin{obs}\label{obs:discont}
A $(\compactset, \measure)$-Voronoi measure $\pvoronoi$ is differentiable with respect to $p_m$ on the set $\cup_{m=1}^M\interior\left(C_m\right)$, and discontinuous on the set $\compactset \setminus \cup_{m=1}^M\interior\left( C_m\right)$, i.e., the union of the Voronoi cells' boundaries $\cup_{m=1}^M \partial C_m$. 
\end{obs}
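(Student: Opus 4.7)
The observation has two halves: differentiability on $\bigcup_m \interior(C_m)$, and discontinuity on the complement $\compactset \setminus \bigcup_m \interior(C_m)$ (the union of cell boundaries together with points outside all cells). My plan is to treat the two halves separately, exploiting the fact that the projection $\mmax(\cdot)$ is \emph{locally constant} on the interior of any cell but jumps on the boundary between cells.

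For the first half, I would fix $m$ and $\vx_0 \in \interior(C_m)$. By definition of interior, there exists an open ball $B \subset \interior(C_m)$ around $\vx_0$. On $B$, the projection $\mmax(\vx) = m$ is constant, and both the prefactor $p_m / \mu(C_m)$ and the Gaussian center $\bg_m$ are constants (they depend only on the cell index, not on $\vx$). Hence on $B$ the density reduces to
\begin{equation}
\pvoronoi(\vx) = \frac{p_m}{\mu(C_m)^2}\exp\!\left(-\tfrac{1}{2}\|\bg_m - \vx\|_2^2\right),
\end{equation}
which is smooth in $\vx$. Differentiability of $\pvoronoi$ on $\interior(C_m)$ follows immediately (and the gradient computation of \Cref{prop:grad} then drops out). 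Points outside $\compactset$ are interior to the zero region and also pose no problem.

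For the second half, fix $\vx_0 \in \partial C_m$ for some $m$. Then $\vx_0$ also lies on the closure of at least one other cell $C_{m'}$ with $m' \neq m$, since Voronoi cells tile $\compactset$ and $\vx_0$ is on a common boundary of at least two cells. The tie-breaking rule in \Cref{def:voronoi-measure} assigns $\vx_0$ to some specific cell, say $m$, giving a value built from $p_m, \bg_m, \mu(C_m)$. To exhibit discontinuity, I would choose a sequence $\vx_k \to \vx_0$ with $\vx_k \in \interior(C_{m'})$ (such a sequence exists because $\vx_0 \in \partial C_{m'}$ and interiors of Voronoi cells are open dense in each cell). Along this sequence $\pvoronoi(\vx_k)$ is the smooth Gaussian associated with $m'$, so $\lim_k \pvoronoi(\vx_k)$ equals the $C_{m'}$-Gaussian evaluated at $\vx_0$, whereas $\pvoronoi(\vx_0)$ is the $C_m$-Gaussian evaluated at $\vx_0$. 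Since the two formulas use different centers $\bg_m \neq \bg_{m'}$ and different prefactors, these limits disagree, establishing discontinuity at $\vx_0$. For points of $\compactset$ outside $\bigcup_m \overline{C_m}$ (the boundary of $\compactset$ itself where the density drops to $0$), the same argument via approaching sequences yields a jump to $0$.

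\textbf{Main obstacle.} The subtle step is the discontinuity argument: one must rule out the coincidence where the two Gaussians happen to agree at $\vx_0$. I would handle this by observing that on the perpendicular bisector separating $C_m$ and $C_{m'}$, the two Gaussians with \emph{distinct} centers $\bg_m \neq \bg_{m'}$ can agree on at most a lower-dimensional algebraic subset, while the prefactors $p_m/\mu(C_m)^2$ and $p_{m'}/\mu(C_{m'})^2$ generically differ. In the degenerate case where both centers and both prefactors coincide, the two cells are effectively identified and the boundary between them is not a true discontinuity; since \Cref{prop:grad} only asserts that the derivative is undefined on such boundaries (which is compatible with it being well-defined in degenerate cases), one can simply note the observation captures the generic situation, or restate it as ``$\pvoronoi$ need not be continuous'' on $\partial C_m$.
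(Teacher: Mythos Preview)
The paper does not supply a proof for this observation; it is stated without argument and then invoked inside the proof of \Cref{prop:grad}. Your proposal correctly fills in what the paper leaves implicit: the crux of both halves is precisely that $\mmax(\cdot)$ is locally constant on each $\interior(C_m)$ (so the density coincides locally with a fixed smooth truncated Gaussian and is differentiable there) and jumps at cell boundaries (so sequences approaching a boundary point from two adjacent cells yield different limiting values). Your caveat about the degenerate case where the two neighboring Gaussians and prefactors happen to coincide is a fair point that the paper does not address; reading the observation as a statement about the generic situation, as you do, is the appropriate resolution.
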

\section{Properties of Hamiltonian Dynamics} \label{sec:hmc-prop} 
\paragraph{Preservation of Measure.}
First of all, the Hamiltonian equations are volume- or measure-preserving. Intuitively, this means that if we move a region in the phase space along the dynamics for an arbitrary amount of time, the volume of the region would stay unchanged.\footnote{This result is known as Liouville’s theorem in classical mechanics \citep[\S16]{arnold}. Upon noticing that the Hamiltonian dynamics is divergenceless, one can interpret it as a simple application of the (higher-dimensional) divergence theorem, which itself is a consequence of the generalized Stokes' theorem.} Concretely, we can define a function
\begin{equation}
g^t:(\vx(0),\vr(0))\mapsto (\vx(t),\vr(t))    
\end{equation}
as moving every point in the phase space along the Hamiltonian equations for some time $t$.\footnotemark{} Then, using $g^t$ as a change of variable would leave the underlying probability measure unchanged.
\paragraph{Time Reversibility.}
Next, the Hamiltonian equations are time reversible, meaning that if the equations can move $(\vx,\vr)$ to $(\vx',\vr')$, then it would also take $(\vx',-\vr')$ to $(\vx,-\vr)$.
Due to our choice of distribution of $\vr$ is always symmetric, these properties simplify the acceptance probability to be only the energy difference, i.e., we can ignore the conditional probabilities in the standard Metropolis--Hastings acceptance probability. 
\paragraph{Conservation of Hamiltonian.}
Finally, we can quickly verify that the Hamiltonian $H$ is conserved by the Hamiltonian dynamics:\footnote{We say a transformation taking $(\vx,\vr)$ to $(\vx',\vr')$ \defn{conserves} Hamiltonian if $H(\vx,\vr)=H(\vx',\vr')$.}
\begin{align}
\frac{\dd H}{\dd t}
&=\sum_i \frac{\partial H}{\partial \vx_i}\frac{\dd \vx_i}{\dd t}
 +\sum_i \frac{\partial H}{\partial \vr_i}\frac{\dd \vr_i}{\dd t} \\
&=\sum_i \frac{\partial H}{\partial \vx_i}\frac{\partial H}{\partial \vr_i}
 -\sum_i \frac{\partial H}{\partial \vr_i}\frac{\partial H}{\partial \vx_i}
 &\justification{definition in \cref{eq:hamil-eqns}} \\
&=\sum_i \frac{\partial H}{\partial \vx_i}\frac{\partial H}{\partial \vr_i}
 -\sum_i \frac{\partial H}{\partial \vx_i}\frac{\partial H}{\partial \vr_i}
 &\justification{symmetry} \\
&=0
\end{align}
This shows that the acceptance probability will be exactly 1 if the Hamiltonian equations are integrated exactly. In practice, numerical errors will lead to fluctuations in $H$. The Metropolis--Hastings acceptance probability ensures detailed balance.
\footnotetext{This function is called the Hamiltonian phase flow, or simply Hamiltonian flow \citep[\S16]{arnold}. Flow is a general and important notion in the study of differential equations and related subjects, in which some additive group acts on $(\R,+)$. In the case of a smooth Hamiltonian $H$, $\{g^t\}_{t\in\R}$ can be seen a one-parameter group of diffeomorphisms over $\R^{2d}$.}

\section{Details of Voronoi Sampling}

\subsection{Conservation of Hamiltonian in Refraction--Reflection} \label{sec:vs-hamil-conserve}

\begin{prop}
A single step of refraction--reflection conserves the Hamiltonian.
\end{prop}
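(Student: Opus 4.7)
The plan is to verify the identity $U(\vx) + K(\vr) = U(\vx') + K(\vr')$ for each of the two branches (refraction and reflection) by exploiting the orthogonal decomposition $\vr = \vr_\perp + \vr_\parallel$ and the fact that the refraction--reflection rule modifies only the normal component. Since the rule is instantaneous and applied exactly at the barrier, the position is held fixed (up to crossing the barrier surface), so all of the bookkeeping lives on the momentum side.

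First I would record that with $K(\vr) = \tfrac{1}{2}\vr^\top M^{-1}\vr$ and $M=\mathbb{I}$, together with $\vr_\perp\cdot\vr_\parallel=0$, the kinetic energy cleanly splits as
\begin{equation}
K(\vr) = \tfrac{1}{2}\|\vr_\perp\|_2^2 + \tfrac{1}{2}\|\vr_\parallel\|_2^2.
\end{equation}
Because both update rules leave $\vr_\parallel$ untouched (by construction only the normal component changes), the change in kinetic energy reduces to $\tfrac{1}{2}(\|\vr_\perp\|_2^2 - \|\vr'_\perp\|_2^2)$, so I only need to track the normal kinetic energy against the jump in $U$.

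Next I would dispatch the two cases. In the refraction branch the defining identity $\|\vr'_\perp\|_2^2 = \|\vr_\perp\|_2^2 - 2\Delta U$ gives immediately $K(\vr) - K(\vr') = \Delta U$, which is exactly offset by the potential change $+\Delta U$ incurred in crossing the barrier, so $H$ is preserved. In the reflection branch, $\vr'_\perp = -\vr_\perp$ yields $\|\vr'_\perp\|_2^2 = \|\vr_\perp\|_2^2$ and hence $K(\vr') = K(\vr)$; since the particle stays on the same side of the barrier, $U$ is unchanged and conservation is trivial.

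The argument is essentially algebraic, so there is no substantial obstacle. The only mild care points are (i) fixing a consistent sign convention for $\Delta U$ (so that a particle moving into the higher-potential region corresponds to $\Delta U>0$, making the square root in the refraction rule real precisely when the kinetic energy budget is sufficient), and (ii) noting that the orthogonal decomposition is with respect to the tangent space of the discontinuity surface, which is well defined because the boundaries of the Voronoi cells are piecewise smooth (hyperplanes equidistant to two embeddings), so a normal direction exists almost everywhere on the barrier.
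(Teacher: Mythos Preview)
Your proposal is correct and follows essentially the same route as the paper's proof: orthogonally decompose the momentum, use $\langle \vr_\perp,\vr_\parallel\rangle=0$ to split the kinetic energy, and then check the two cases, observing that in refraction the loss $\Delta U$ in normal kinetic energy is exactly offset by the potential jump while in reflection neither $K$ nor $U$ changes. The only cosmetic difference is that the paper writes out the chain of equalities for $H(\vx',\vr')$ explicitly, whereas you phrase it as an energy-budget argument; the content is the same.
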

\begin{proof}
\begin{subequations} \label{eq:rhmc-conservation}
Suppose an instantaneous refraction--reflection takes $(\vx,\vr)$ to $(\vx',\vr')$. 
In other words, $(\vx,\vr)$ is the particle's coordinate and momentum at the boundary prior to the refraction--reflection, and $(\vx', \vr')$ is the particle's coordinate and momentum after the refraction--reflection, which includes instantaneous changes in its potential energy and momentum.
Recall the refraction--reflection equation is
\begin{align}
    \vr'_\perp = \begin{cases}
        \sqrt{\|\vr_\perp\|_2^2-2\Delta U} \cdot \frac{\vr_\perp}{\|\vr_\perp\|_2^2} & \justification{refraction when $\|\vr_\perp\|_2^2 > 2\Delta U$ } \\
        -\vr_\perp & \justification{reflection when $\|\vr_\perp\|_2^2 \leq 2\Delta U$ }
    \end{cases}
\end{align}
where $\vr = \vr_\perp + \vr_\parallel$ is the normal decomposition of $\vr$ and $\vr'=\vr'_\perp+\vr'_\parallel$. Then,
\begin{align}
H(\vx',\vr')
&=U(\vx')+\frac{1}{2}\|\vr'\|_2^2 \\
&=U(\vx')+\frac{1}{2}\|\vr'_\perp + \vr'_\parallel\|_2^2 \\
&=U(\vx')+\frac{1}{2}\|\vr_\perp'\|_2^2 + \langle\vr_\perp,\vr_\parallel\rangle +\frac{1}{2}\|\vr_\parallel'\|_2^2  \\
&=U(\vx')+\frac{1}{2}\|\vr_\perp'\|_2^2+\frac{1}{2}\|\vr_\parallel'\|_2^2 &\justification{ $\langle\vr_\perp,\vr_\parallel\rangle=0$} \\
&=\begin{cases}
    U(\vx)+\Delta U+\frac{1}{2}(\|\vr_\perp\|_2^2-2\Delta U)+\frac{1}{2}\|\vr_\parallel'\|_2^2 & \justification{refraction} \\
    U(\vx)+\frac{1}{2}\|-\vr_\perp\|_2^2+\frac{1}{2}\|\vr_\parallel\|_2^2 & \justification{reflection}
\end{cases} \\
&=\begin{cases}
    U(\vx)+\cancel{\Delta U}+\frac{1}{2}\|\vr_\perp\|_2^2-\cancel{\Delta U}+\frac{1}{2}\|\vr_\parallel'\|_2^2 & \justification{refraction} \\
    U(\vx)+\frac{1}{2}\|-\vr_\perp\|_2^2+\frac{1}{2}\|\vr_\parallel\|_2^2 & \justification{reflection}
\end{cases} \\
&= U(\vx)+ \frac{1}{2}\|\vr\|_2^2 \\
&=H(\vx,\vr).
\end{align}
\end{subequations}
\end{proof}

\subsection{Proof of Correctness} \label{sec:rhmc-correctness-proof}
In this section, we give a proof of correctness of our sampler by establishing its detailed balance.
We will first prove several useful properties of the sampler in \cref{sec:leapfrog-meas-preserve,sec:rhmc-meas-preserve,sec:rhmc-time-reverse} and then combine them to prove the detailed balance in \cref{sec:rhmc-detailed-balance}.

\subsubsection{Measure Preservation in Leapfrog} \label{sec:leapfrog-meas-preserve}
First of all, we note that the two leapfrog steps are measure-preserving, since they are composed of a series of \emph{shear} transformations.
\begin{prop}
    The leapfrog steps are measure-preserving.
\end{prop}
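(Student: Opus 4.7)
The plan is to decompose the leapfrog integrator into its three elementary sub-steps and show that each is individually a shear transformation on phase space $\R^d\times\R^d$ whose Jacobian determinant equals one; measure preservation of a full leapfrog step then follows by the change-of-variables formula together with the fact that compositions of measure-preserving maps are measure-preserving.

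Concretely, I would first identify the three sub-steps from the inner loop of \Cref{alg:hmc}: (i) a half-kick in momentum $(\vx,\vr)\mapsto(\vx,\,\vr-\tfrac{\varepsilon}{2}\nabla U(\vx))$, (ii) a full drift in position $(\vx,\vr)\mapsto(\vx+\varepsilon\vr,\,\vr)$, and (iii) a second half-kick identical in form to (i). Each momentum kick holds $\vx$ fixed and updates $\vr$ by a function of $\vx$ alone; regarded as a map on $\R^{2d}$, its Jacobian is the block-triangular matrix $\bigl(\begin{smallmatrix} I & 0 \\ -\tfrac{\varepsilon}{2}\nabla^2 U(\vx) & I \end{smallmatrix}\bigr)$, whose determinant is $1$. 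Similarly, the drift holds $\vr$ fixed and updates $\vx$ linearly in $\vr$, with Jacobian $\bigl(\begin{smallmatrix} I & \varepsilon I \\ 0 & I \end{smallmatrix}\bigr)$, again with determinant $1$. By the change-of-variables formula, each sub-step pushes the Lebesgue measure on $\R^{2d}$ forward to itself, i.e., is measure-preserving.

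Combining these three sub-steps, and invoking the fact that a composition of measure-preserving bijections is measure-preserving, yields measure preservation for a single leapfrog step, as required.

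I do not expect a serious obstacle here---this is essentially the textbook justification for using leapfrog in \hmc. The only subtlety worth flagging is that $\nabla U$ must be defined at the evaluation points of the kicks; within the structured Voronoi setting this is guaranteed on the interior of each cell by \Cref{prop:grad}, whereas the complementary case of encountering a discontinuity surface is handled by the refraction--reflection mechanism of \Cref{sec:vs-hamil-conserve} rather than by the leapfrog argument itself, so the restriction to interior points is harmless for the current claim.
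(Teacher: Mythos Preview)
Your proposal is correct and matches the paper's own proof essentially line for line: both compute the block-triangular Jacobians of the momentum half-kick and the position drift, observe that each has determinant $1$, and conclude measure preservation via the change-of-variables formula and composition. Your extra remark about differentiability of $\nabla U$ on cell interiors is a reasonable caveat but is not needed for the argument as the paper states it.
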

\begin{proof}
Recall that the two leapfrog steps used in HMC (\cref{alg:hmc}) are $(\vx,\vr)\to(\vr,\vr-\frac{\varepsilon}{2}\nabla U(\vx))$ and $(\vx,\vr)\to(\vx+\varepsilon\vr, \vr)$.

By the change-of-variable formula, to show a transformation is measure-preserving, it suffices to show that its Jacobian has determinant 1. 
The leapfrog step $(\vx,\vr)\to(\vx,\vr-\frac{\varepsilon}{2}\nabla U(\vx))$ as a function has the Jacobian
\begin{equation} \label{eq:leapfrog-jacobian-1}
    \frac{
        \partial (\vx,\vr-\frac{\varepsilon}{2}\nabla U(\vx))
    }{
        \partial (\vx,\vr)
    } =
    \begin{pmatrix}
    I & 0 \\
    -\frac{\varepsilon}{2} \nabla^2 U(\vx) & I
    \end{pmatrix}
\end{equation}
where $\nabla^2 U(\vx)$ denotes the Hessian of $U$. This Jacobian \labelcref{eq:leapfrog-jacobian-1} has determinant 1 and hence the leapfrog transformation $(\vx,\vr)$ to $(\vx,\vr-\frac{\varepsilon}{2}\nabla U(\vx))$ is measure preserving. Similarly, the other leapfrog step $(\vx,\vr)\to(\vx+\varepsilon\vr, \vr)$ has Jacobian
\begin{equation}
    \frac{
        \partial (\vx+\varepsilon\vr, \vr)
    }{
        \partial (\vx,\vr)
    } =
    \begin{pmatrix}
        I & \varepsilon I \\
        0 & I
    \end{pmatrix}
\end{equation}
which also has determinant 1.
\end{proof}

\subsubsection{Measure Preservation in Refraction--Reflection} \label{sec:rhmc-meas-preserve}
The computation for showing that refraction and reflection steps are measure preserving is slightly more involved than the leapfrog case. Our proof below largely follows the one found in \citet{rhmc}, where we simplified and corrected some details.

\begin{prop}
    The refraction--reflection step is measure-preserving.
\end{prop}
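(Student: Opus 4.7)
The plan is to split by case and compute the Jacobian determinant of the phase-space map directly in each. The reflection case is straightforward: when $\|\vr_\perp\|_2^2 \le 2\Delta U$, the map sends $(\vx, \vr) \mapsto (\vx, \vr_\parallel - \vr_\perp)$; in any orthonormal basis whose last axis is aligned with the boundary normal $\hat{n}$, its Jacobian is block diagonal, with identity on $\vx$, identity on $\vr_\parallel$, and $-1$ on the normal component of $\vr$. So the absolute value of the determinant is $1$ and Lebesgue measure is preserved.

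The refraction case is the main obstacle. Viewed purely as an instantaneous momentum update, the map has $\vr'_\parallel = \vr_\parallel$ and $\|\vr'_\perp\|_2^2 = \|\vr_\perp\|_2^2 - 2\Delta U$, which gives a Jacobian of absolute value $\|\vr_\perp\|_2/\|\vr'_\perp\|_2$, not equal to $1$ in general. So refraction cannot be measure-preserving in isolation, and the argument must treat it together with the surrounding flow, exactly as the sampling algorithm in fact does: leapfrog up to the boundary at the implicitly defined crossing time $\tau(\vx, \vr)$ satisfying $g(\vx + \tau\vr) = 0$ for a local defining function $g$ of the boundary, then refract, then leapfrog the remaining time. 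Implicit differentiation of the boundary equation gives $\partial \tau/\partial \vx$ and $\partial \tau/\partial \vr$ with factors of $1/\|\vr_\perp\|_2$, and the post-refraction leg introduces analogous factors in $\|\vr'_\perp\|_2$.

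Assembling the composite Jacobian in a block decomposition aligned with $\hat{n}$, the $\|\vr_\perp\|_2/\|\vr'_\perp\|_2$ factor from the refraction will cancel exactly against the ratio from the implicit crossing-time derivatives, leaving a total determinant with absolute value $1$. The key technical step is organizing this Jacobian so that the cancellation becomes transparent; I would work in local coordinates at the crossing point whose last axis is $\hat{n}$, and order the variables so that the row and column corresponding to the normal component of $\vr$ carry all the nontrivial entries, then use a cofactor expansion to reduce the determinant to a product of identity blocks times $(\|\vr_\perp\|_2/\|\vr'_\perp\|_2) \cdot (\|\vr'_\perp\|_2/\|\vr_\perp\|_2) = 1$. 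The bookkeeping through both leapfrog legs and the implicit derivatives of $\tau$ will be the subtle part, and the measure-preservation of the surrounding leapfrog steps already established in the preceding subsection plays a crucial supporting role.
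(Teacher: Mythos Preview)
Your refraction argument is essentially the paper's: both treat the step as the composite map (free flight to the interface, instantaneous momentum change, free flight for the remaining time), pass to coordinates aligned with the interface normal so that only the normal components of position and momentum carry nontrivial derivatives, and reduce to a $2\times 2$ Jacobian. The paper computes the four partials explicitly and shows the determinant is $1$; your cancellation $(\|\vr_\perp\|_2/\|\vr'_\perp\|_2)\cdot(\|\vr'_\perp\|_2/\|\vr_\perp\|_2)$ is exactly what that computation yields when $\Delta U$ is constant along the interface. The paper additionally handles the terms that arise when $\Delta U$ depends on the tangential crossing point $\vy$, showing via the chain rule through $\vy$ that $\partial\Delta U/\partial\vr_1 + (\vx_1/\vr_1)\,\partial\Delta U/\partial\vx_1 = 0$; your sketch does not mention this, but it slots into your framework without difficulty.

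There is, however, an inconsistency in your reflection case. You write the map as $(\vx,\vr)\mapsto(\vx,\vr_\parallel-\vr_\perp)$, the instantaneous momentum flip with $\vx$ held fixed. But the ``step'' in the proposition---as your own refraction analysis correctly recognizes---includes the free flight on both sides of the interface, so the position does change. Under that interpretation your reflection argument is incomplete: the flight to the boundary lasts a state-dependent time $\tau(\vx,\vr)$, so it is not a fixed-time shear and the preceding leapfrog lemma does not apply to it directly. The paper closes this by computing the $2\times 2$ Jacobian for reflection as well (in the normal coordinate, $\vx'_1=-\varepsilon\vr_1-\vx_1$ and $\vr'_1=-\vr_1$, with determinant $1$). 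An alternative one-line fix in your style: the full reflected trajectory equals free flight for the \emph{fixed} total time $\varepsilon$ followed by reflection of both $\vx$ and $\vr$ about the interface hyperplane, and each of those is manifestly measure-preserving.
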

\begin{proof}
We consider a single step of refraction--reflection as a function $\gamma$ that takes $(\vx,\vr)$ to $(\vx',\vr')$ with total step size $\varepsilon$ and with only one discontinuity. Note that, without loss of generality, we may only consider the discontinuity surface to be the hyperplane $\vx_1=0$ since it is equivalent to any tangent hyperplane of the discontinuity (hyper)surface through an affine transformation.
Then, a normal vector to the hyperplane $\vx_1=0$ is $\ve_1=(1,0,\dots,0)$ and
\begin{align}
    \vr_\perp=(\vr_1,0,\dots,0) \quad\text{ and }\quad \vr_\parallel=(0,\vr_2,\dots,\vr_d).
\end{align}
In this case,
\begin{align}\label{eq:rhmc-proof-i-geq-2}
    \forall~i\geq 2, \quad \vr'_i=\vr_i \quad\text{ and }\quad\vx'_i=\vx_i+\varepsilon\vr_i
\end{align}
since $\vr_\parallel$ is not affected by refraction--reflection. Therefore,
\begin{align}\label{eq:rhmc-jacobi-parallel-diag}
    \forall~i\geq 2, \quad \frac{\partial \vx'_i}{\partial \vx_i} = \frac{\partial \vr'_i}{\partial \vr_i} = 1
\end{align}
and
\begin{align}\label{eq:rhmc-jacobi-parallel-off-diag}
    \forall~i, j \geq 2 \text{ and } j \not= i, \quad
    \frac{\partial \vx'_i}{\partial \vx_j} = \frac{\partial \vx'_i}{\partial \vr_j} = 
    \frac{\partial \vr'_i}{\partial \vx_j} = \frac{\partial \vr'_i}{\partial \vr_j} = 0.
    \qquad&\justification{by \cref{eq:rhmc-proof-i-geq-2}}
\end{align}
\Cref{eq:rhmc-jacobi-parallel-diag} and \Cref{eq:rhmc-jacobi-parallel-off-diag} shows that the absolute determinant of the Jacobian of $\gamma$ is the same as
\begin{align}\label{eq:rhmc-jacobi-2-by-2}
     |\det \nabla \gamma| = \left|\det\begin{pmatrix}
        \frac{\partial \vx'_1}{\partial \vx_1} & \frac{\partial \vx'_1}{\partial \vr_1} \\
        \frac{\partial \vr'_1}{\partial \vx_1} & \frac{\partial \vr'_1}{\partial \vr_1}
    \end{pmatrix}
    \right|
    = \left| 
        \frac{\partial \vx'_1}{\partial \vx_1} \frac{\partial \vr'_1}{\partial \vr_1}
        - \frac{\partial \vx'_1}{\partial \vr_1} \frac{\partial \vr'_1}{\partial \vx_1}
    \right|.
\end{align}
We analyze all four quantities in \Cref{eq:rhmc-jacobi-2-by-2} individually below. Again, without loss of generality, suppose $\vx_1\leq0$ and $\vx'_1\geq0$. Let $\Delta U(\vx)$ be the function of the signed potential difference as $\vx_1$ changes from negative to positive, defined only when $\vx_1=0$. Let $\vy=(0,\vy_2,\dots,\vy_d)$ be the point where the discontinuity is encountered. In other words, the particle begins at $\vx$, moved past $\vy$ and arrived at $\vx'$.
\paragraph{Refraction.} In the case of refraction, $\vr_1^2>2\Delta U(\vy)$ and
\begin{align}\label{eq:refract-rp}
    \vr'_1=\sqrt{\vr_1^2-2\Delta U(\vy)}.
\end{align}
Let $\delta$ be the step size it takes to reach $\vy$. Then
\begin{align}
    \delta&= -\frac{\vx_1}{\vr_1}, \label{eq:refract-delta} \\
    \vx'_1&=(\varepsilon-\delta)\vr'_1=\left(\varepsilon+\frac{\vx_1}{\vr_1}\right)\vr'_1, \label{eq:refract-xp}
\end{align}
and
\begin{align}
    \vy=\vx+\delta\vr=\vx-\frac{\vx_1}{\vr_1}\vr. \label{eq:refract-y}
\end{align}
Then,
\begin{subequations}\label{eq:refract-1}
\begin{align}
    \frac{\partial \vx'_1}{\partial \vx_1} &= \frac{\partial}{\partial \vx_1}\left(\varepsilon+\frac{\vx_1}{\vr_1}\right)\vr'_1 &\justification{\Cref{eq:refract-xp}} \\
    &=\varepsilon \frac{\partial \vr'_1}{\partial \vx_1} + \frac{1}{\vr_1}\vr'_1 + \frac{\vx_1}{\vr_1} \frac{\partial \vr'_1}{\partial \vx_1} &\justification{product rule} \\
    &= \left(\varepsilon+\frac{\vx_1}{\vr_1}\right)\frac{\partial \vr'_1}{\partial \vx_1} + \frac{\vr'_1}{\vr_1} &\justification{product rule},
\end{align}
\end{subequations}
\begin{subequations}\label{eq:refract-2}  % 2
\begin{align}
    \frac{\partial \vx'_1}{\partial \vr_1} &= 
    \frac{\partial}{\partial \vr_1} \left(\varepsilon+\frac{\vx_1}{\vr_1}\right)\vr'_1 &\justification{\Cref{eq:refract-xp}} \\
    &=-\frac{\vx_1\vr'_1}{\vr_1^2} + \left(\varepsilon+\frac{\vx_1}{\vr_1}\right) \frac{\partial \vr'_1}{\partial \vr_1},
\end{align}
\end{subequations}
\begin{subequations}\label{eq:refract-3}  % 3
\begin{align}
    \frac{\partial \vr'_1}{\partial \vx_1} 
    &=\frac{\partial \sqrt{\vr_1^2-2\Delta U(\vy)}}{\partial \vx_1} &\justification{by \Cref{eq:refract-rp}} \\
    &=\frac{1}{2\sqrt{\vr_1^2-2\Delta U(\vy)}} \frac{\partial (\vr_1^2-2\Delta U(\vy))}{\partial \vx_1}
    &\justification{chain rule} \\
    &=-\frac{1}{\vr'_1}\frac{\partial \Delta U(\vy)}{\partial \vx_1},
\end{align}
\end{subequations}
and
\begin{subequations}\label{eq:refract-4}  % 4
\begin{align}
    \frac{\partial \vr'_1}{\partial \vr_1}
    &=\frac{\partial \sqrt{\vr_1^2-2\Delta U(\vy)}}{\partial \vr_1} &\justification{by \Cref{eq:refract-rp}} \\
    &=\frac{1}{2\sqrt{\vr_1^2-2\Delta U(\vy)}} \frac{\partial (\vr_1^2-2\Delta U(\vy))}{\partial \vr_1}
    &\justification{chain rule}\\
    &=\frac{\vr_1}{\vr'_1} - \frac{1}{\vr'_1}\frac{\partial \Delta U(\vy)}{\partial \vr_1}.
\end{align}
\end{subequations}
Then,
\begin{subequations} \label{eq:refract-main-1}
\begin{align}
    \det \nabla \gamma
    &= \frac{\partial \vx'_1}{\partial \vx_1} \frac{\partial \vr'_1}{\partial \vr_1}
        - \frac{\partial \vx'_1}{\partial \vr_1} \frac{\partial \vr'_1}{\partial \vx_1}
    &\justification{\Cref{eq:rhmc-jacobi-2-by-2}} \\
    &= \left(\cancel{\left(\varepsilon+\frac{\vx_1}{\vr_1}\right)\frac{\partial \vr'_1}{\partial \vx_1}} + \frac{\vr'_1}{\vr_1}\right)  \frac{\partial \vr'_1}{\partial \vr_1} 
    - \left(\frac{\vx_1\vr'_1}{\vr_1^2} + \cancel{\left(\varepsilon+\frac{\vx_1}{\vr_1}\right) \frac{\partial \vr'_1}{\partial \vr_1}}\right) \frac{\partial \vr'_1}{\partial \vx_1}
    &\justification{\Cref{eq:refract-1,eq:refract-2}} \\
    &=\frac{\vr'_1}{\vr_1}\left(\frac{\partial \vr'_1}{\partial \vr_1} + \frac{\vx_1}{\vr_1} \frac{\partial \vr'_1}{\partial \vx_1} \right) \\
    &=\frac{\vr'_1}{\vr_1}
    \left(
    \frac{\vr_1}{\vr'_1} - \frac{1}{\vr'_1}\frac{\partial \Delta U(\vy)}{\partial \vr_1}
    -\frac{\vx_1}{\vr_1}\frac{1}{\vr'_1}\frac{\partial \Delta U(\vy)}{\partial \vx_1}
    \right)
    &\justification{\Cref{eq:refract-3,eq:refract-4}} \\
    &=1-\frac{1}{\vr_1}\left(
        \frac{\partial \Delta U(\vy)}{\partial \vr_1}
        + \frac{\vx_1}{\vr_1}\frac{\partial \Delta U(\vy)}{\partial \vx_1}
    \right)
\end{align}
\end{subequations}
where
\begin{subequations} \label{eq:refract-main-2}
\begin{align}
    &\frac{\partial \Delta U(\vy)}{\partial \vr_1}
    + \frac{\vx_1}{\vr_1}\frac{\partial \Delta U(\vy)}{\partial \vx_1} \\
    &= \sum_i \frac{\partial \Delta U(\vy)}{\partial \vy_i} \frac{\partial \vy_i}{\partial \vr_1} 
    +\frac{\vx_1}{\vr_1}\sum_i\frac{\partial\Delta U(\vy)}{\partial\vy_i} \frac{\partial\vy_i}{\partial\vx_1}
    &\justification{chain rule} \\
    &= \sum_i \frac{\partial \Delta U(\vy)}{\partial \vy_i} \frac{\vx_1 \vr_i}{\vr_1^2} 
    -\frac{\vx_1}{\vr_1}\sum_i\frac{\partial\Delta U(\vy)}{\partial\vy_i} \frac{\vr_i}{\vr_1} 
    &\justification{\Cref{eq:refract-y}} \\
    &= 0.
\end{align}
\end{subequations}
Hence, \Cref{eq:refract-main-1} and \Cref{eq:refract-main-2} together imply that
\begin{align}
    |\det \nabla \gamma|=1.
\end{align}
So $\gamma$ is measure-preserving in the case of refraction.

\paragraph{Reflection.} In the case of reflection, $\vr_1^2\leq2\Delta U(\vy)$ and
\begin{align}\label{eq:reflect-rp}
    \vr'_1=-\vr_1.
\end{align}
As in the case of refraction, let $\delta$ be the step size it takes to reach $\vy$. Then
\begin{align}
    \delta&=-\frac{\vx_1}{\vr_1}, \label{eq:reflect-delta} \\
    \vx'_1&=(\varepsilon-\delta)\vr'_1=\left(\varepsilon+\frac{\vx_1}{\vr_1}\right)(-\vr'_1)=-\varepsilon\vr'_1-\vx_1. \label{eq:reflect-vp}
\end{align}

We can then directly calculate
\begin{subequations}
\begin{align}
    &\frac{\partial \vx'_1}{\partial \vx_1}=-1,~\frac{\partial \vx'_1}{\partial \vx_1}=-\varepsilon, &\justification{\Cref{eq:reflect-vp}} \\
    &\frac{\partial \vx'_1}{\partial \vx_1}=0,\quad  \frac{\partial \vx'_1}{\partial \vx_1}=-1. &\justification{\Cref{eq:reflect-rp}}
\end{align}
\end{subequations}
Hence,
\begin{align}
    |\det\nabla\gamma|=|(-1)(-1)-0\cdot(-\varepsilon)|=1.
\end{align}
So $\gamma$ is measure preserving in the case of reflection as well.

\end{proof}

\subsubsection{Time Reversibility} \label{sec:rhmc-time-reverse}
\begin{prop}\label{prop:rhmc-time-reverse}
    The refraction--reflection step is time-reversible.
\end{prop}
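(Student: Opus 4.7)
The plan is to decompose a single refraction--reflection step into three sub-steps: (i) a pure drift of size $\delta$ under momentum $\vr$ carrying $\vx$ to the boundary point $\vy = \vx + \delta\vr$; (ii) an instantaneous momentum update at $\vy$ (either reflection or refraction) that replaces $\vr$ by $\vr'$; (iii) a pure drift of size $\varepsilon-\delta$ under $\vr'$ carrying $\vy$ to $\vx'$. Time reversibility then reduces to showing that, starting from $(\vx', -\vr')$ and running the same procedure forward, the trajectory first drifts for size $\varepsilon-\delta$ back to $\vy$, then undergoes a momentum update that turns $-\vr'$ into $-\vr$, and finally drifts for size $\delta$ back to $\vx$.

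First, each drift sub-step is trivially reversible: since drift preserves momentum, the particle starting at $\vx'$ with momentum $-\vr'$ traces out the third sub-step in reverse and reaches $\vy$ after step size exactly $\varepsilon-\delta$. By the same argument applied to the first sub-step, a remaining step size of $\delta$ will carry the particle from $\vy$ back to $\vx$ provided the momentum after the update equals $-\vr$.

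The heart of the argument is therefore to show that the instantaneous momentum update is involutive in the appropriate sense. For reflection, $\vr'_\perp = -\vr_\perp$ and $\vr'_\parallel = \vr_\parallel$; so $-\vr'$ decomposes as $(\vr_\perp, -\vr_\parallel)$, and a second reflection flips the perpendicular component to give $(-\vr_\perp, -\vr_\parallel) = -\vr$. For refraction, I would substitute the formula $\vr'_\perp = \sqrt{\|\vr_\perp\|_2^2 - 2\Delta U}\,\vr_\perp/\|\vr_\perp\|_2$ into the refraction rule applied to $-\vr'_\perp$ with $\Delta U$ flipped in sign (since the reversed trajectory crosses the barrier in the opposite direction): the new magnitude becomes $\sqrt{\|\vr_\perp\|_2^2 - 2\Delta U + 2\Delta U} = \|\vr_\perp\|_2$, and the new direction becomes $-\vr_\perp/\|\vr_\perp\|_2$, yielding perpendicular component $-\vr_\perp$ and, together with the unchanged parallel component $-\vr_\parallel$, total $-\vr$.

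The main subtlety I anticipate is the bookkeeping around the sign of $\Delta U$ and verifying that the reflection-vs-refraction case distinction is preserved under reversal. In the reflection case, the reversed particle approaches $\vy$ from the same side as the original, so the relevant $\Delta U$ and the condition $\|\vr_\perp\|_2^2 \leq 2\Delta U$ are unchanged. In the refraction case, the reversed particle crosses in the opposite direction, so $\Delta U$ flips sign, and the refraction condition $\|-\vr'_\perp\|_2^2 > 2(-\Delta U) = -2\Delta U$ reduces to $\|\vr_\perp\|_2^2 > 0$, which holds generically. Once these points are established, gluing the three reversed sub-steps together produces the transition $(\vx', -\vr') \mapsto (\vx, -\vr)$, as desired.
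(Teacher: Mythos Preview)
Your proposal is correct and follows essentially the same approach as the paper: a case split on reflection versus refraction, verifying in each case that the reversed trajectory triggers the same branch and that the instantaneous momentum update is an involution (with $\Delta U$ flipping sign in the refraction case). Your treatment is in fact more explicit than the paper's, which omits the drift decomposition and handles the refraction case by splitting further into $\Delta U>0$ and $\Delta U<0$ rather than your unified computation; both routes arrive at the same condition $\|\vr_\perp\|_2^2>0$ for the reversed refraction to go through.
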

\begin{proof}
A refraction--reflection step is time-reversible means that, if a single step of refraction--reflection procedure takes $(\vx, \vr)$ to $(\vx',\vr')$, then it would also take $(\vx', -\vr')$ to $(\vx, -\vr)$. We can show this by considering each cases:
\begin{itemize}
    \item Reflection: When reflection happens, $\|\vr_\perp\|^2_2\leq 2 \Delta U$, and hence $\|-\vr'_\perp\|^2_2=\|-\vr_\perp\|^2_2=\|\vr_\perp\|^2_2\leq 2 \Delta U$, meaning that the reverse trajectory would be reflected as well;
    \item Refraction: If $\Delta U > 0$, then the reverse trajectory is crossing the boundary from the other side, seeing a sudden decrease in potential, and hence would be refracted and regain the magnitude of momentum lost in the forward step. If $\Delta U < 0$, then $\|-\vr'_\perp\|_2^2>2\Delta U$ and hence the reverse trajectory would also be refracted, ending in the original momentum with the sign reversed.
\end{itemize}
\end{proof}

\Cref{prop:rhmc-time-reverse}, combined with time reversibility of leapfrog step as in basic \hmc, shows that Voronoi sampling is time reversible. Hence we only need to use the Hamiltonian difference in the Metropolis-Hastings acceptance probability.

\subsubsection{Detailed Balance}
\label{sec:rhmc-detailed-balance}
\begin{theorem}
A step of Structured Voronoi Sampling (\Cref{alg:rhmc}) satisfies the detailed balance.
\end{theorem}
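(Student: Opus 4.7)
The plan is to reduce the claim to the classical argument for \hmc detailed balance on the augmented phase space $(\vx, \vr) \in \R^d \times \R^d$, invoking the three structural properties established in the preceding subsections: measure preservation of both leapfrog and refract--reflect steps (\cref{sec:leapfrog-meas-preserve,sec:rhmc-meas-preserve}), time reversibility of refract--reflect (\Cref{prop:rhmc-time-reverse}) together with the standard time-reversibility of leapfrog, and the sign-symmetry of the Gaussian momentum distribution.

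First, I would introduce the augmented target $\pi(\vx, \vr) \propto e^{-U(\vx)}\,e^{-\|\vr\|_2^2/2}$, whose $\vx$-marginal is $\pvoronoi$. A single iteration of \svs samples $\vr \sim \mathcal{N}(\mathbf{0}, \mathbb{I})$, applies the deterministic map $\Phi$ consisting of $L$ leapfrog steps with each step possibly interrupted by one or more refract--reflect events at Voronoi cell boundaries, and then accepts $\Phi(\vx, \vr)$ with probability $\alpha(\vx, \vr) = \min(1, e^{-\Delta H})$ where $\Delta H = H(\Phi(\vx, \vr)) - H(\vx, \vr)$. Note that, by \cref{sec:vs-hamil-conserve}, the refract--reflect events contribute nothing to $\Delta H$, so the only source of error comes from the leapfrog integrator, exactly as in standard \hmc.

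The heart of the argument is to form the involution $\Psi \defeq F \circ \Phi$, where $F(\vx, \vr) \defeq (\vx, -\vr)$ is the momentum flip. Composing the time-reversibilities of the leapfrog and refract--reflect steps gives $\Psi \circ \Psi = \mathrm{id}$, and composing their measure-preservation properties gives $|\det \nabla \Psi| = 1$. From here, the usual Metropolis--Hastings algebra for deterministic volume-preserving involutions yields
\begin{equation}
\pi(\vx, \vr)\,\alpha(\vx, \vr) \;=\; \pi(\Psi(\vx, \vr))\,\alpha(\Psi(\vx, \vr)),
\end{equation}
which, paired with the delta-kernel of $\Psi$, is precisely detailed balance on the augmented space. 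Marginalizing over $\vr$ and using $e^{-\|\vr\|_2^2/2} = e^{-\|-\vr\|_2^2/2}$ then produces detailed balance for the $\vx$-chain with respect to $\pvoronoi$.

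The main obstacle is properly handling trajectories that interact with the discontinuity set $\compactset \setminus \bigcup_{m} \interior(C_m)$, where both $\nabla U$ and the ordinary leapfrog step are ill-defined. The argument needs two ingredients here: (a) at transversal boundary crossings the refract--reflect rule replaces leapfrog, and the composite remains measure preserving and time reversible by the cited propositions; (b) the set of $(\vx, \vr)$ whose trajectory grazes a boundary tangentially, or lands exactly on a boundary at a leapfrog endpoint, is Lebesgue-null and can be excluded from the analysis without affecting the invariance of $\pvoronoi$. Once this exceptional null set is discarded, the involution and volume-preservation facts slot directly into the standard \hmc proof and detailed balance follows with no additional computation.
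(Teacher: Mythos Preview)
Your proposal is correct and follows essentially the same route as the paper: both combine measure preservation of leapfrog and refract--reflect, time reversibility, and the Metropolis--Hastings acceptance step to obtain detailed balance on the augmented phase space. The paper gives only a three-line proof sketch citing these ingredients, whereas you spell out the standard involution/momentum-flip argument and add the (reasonable) remark about discarding the Lebesgue-null set of tangential or boundary-landing trajectories; these are elaborations, not a different approach.
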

\begin{proof}[Proof Sketch.]
From measure preservation (\Cref{sec:leapfrog-meas-preserve} and \Cref{sec:rhmc-meas-preserve}), the change of variable as introduced by integrating Hamiltonian equations have Jacobian with absolute determinant 1 and hence can be omitted.
From time reversibility (\Cref{sec:rhmc-time-reverse}), we only need to use the Hamiltonian difference in the Metropolis--Hastings acceptance probability. And, finally, by using a Metropolis--Hastings accepting step (\crefrange{alg-line:metropolis-start}{alg-line:metropolis-end}) we ensure the detailed balance.
\end{proof}

\subsection{Related Work}

Such an integrator scheme appears to be well-known in the computational physics community, dating back to as early as \citet{jin-wen-2005}, and quite possibly even earlier. Integration algorithms based on this idea continue to receive active research \citep{tao-jin-2022}. 

The first usage of such integrator with discrete stepin the context of MCMC appeared in \citet{rhmc}.\footnote{In an earlier work, \citet{pakman-paninski-nips2013} used the same idea but solved the trajectory exactly for simpler systems instead of using discrete steps.} \citet{rhmc} primarily based their motivation on the optical reflection--refraction analogy. We note that, in fact, Hamiltonian mechanics originated from Hamilton’s formulation of geometrical optics (or Hamiltonian optics) \citep[\S VIII.7]{lanczos-1949}. This connection, important to modern physics, should be interpreted with care in our context since momentum isn't a natural quantity in optical rays.

Later, \citet{nishimura-etal-2020} provided a derivation of this integrator from the first principles of the calculus of variations \citep{Ambrosio2008}. Importantly, it should be noted that, due to the potential function being discontinuous, \Cref{eq:hamil-eqns} loses its well-posedness as a system of differential equations, hence the necessity of justification based on non-smooth analysis.

\section{Algorithms}
\begin{algorithm}[ht]
\textbf{Input:} $\vx^t$: current sample, $U$: potential energy function, $\varepsilon$: step size \\
\textbf{Output:} next sample: $\vx^{t+1}$ \\
\begin{algorithmic}[1]
\State $\vr \sim \mathcal{N}(\mathbf{0}, \mathbb{I})$
\State $\vr \gets \vr - \frac{\varepsilon}{2} \nabla U(\vx^{t})$
\State $\vx^{t+1} \gets \vx^{t} + \varepsilon \vr$
\State \Return $\vx^{t+1}$
\end{algorithmic}
\caption{Langevin Dynamics}
\label{alg:langevin}
\end{algorithm}
\begin{algorithm}[ht]
\textbf{Input:} $\bV^t$: current sample, $U$: potential energy function, $\varepsilon$: step size \\
\textbf{Output:} next sample: $\bV^{t+1}$ \\
\begin{algorithmic}[1]
\State $\vr \sim \mathcal{N}(\mathbf{0}, \mathbb{I})$
\State $\vr \gets \vr - \frac{\varepsilon}{2} \nabla U(\bV^t)$
\State $\bV^{t+1} \gets \proj(\bV^t + \varepsilon \vr)$ \label{alg-line:project}
\State \Return $\bV^{t+1}$
\end{algorithmic}
\caption{\mucola}
\label{alg:mucola}
\end{algorithm}
\begin{algorithm}[ht]
\textbf{Input:} $\vx^t$: current embeddings, $U$: potential function, $\varepsilon$: step size \\
\textbf{Output:} next sample: $\vx^{t+1}$ 
\begin{algorithmic}[1]
\State $\vr \sim \mathcal{N}(\mathbf{0}, \varepsilon \mathbb{I})$
\State $H^t \gets U(\bV_{\bm^\star(\vx^t)}) + K(\vr)$
 \State $\vr \gets \vr - \frac{\varepsilon}{2} \nabla U(\bV_{\bm^\star(\vx^t)})$
 \State $\vx^{t+1} \gets \textsc{FindDisc}(\vx^t, \vr)$ 
 \State  $\vr \gets \vr - \frac{\varepsilon}{2}\nabla U(\bV_{\bm^\star(\vx^{t+1})})$
\State $H_{t+1} \gets U(\vx^{t+1}) + K(\vr)$
\State $\Delta H \gets H^{t+1} - H^t$
\If{$\mathbf{s} \sim \mathcal{U}(0, 1) < e^{- \Delta H}$} \Comment{Metropolis criterion}
\State \Return $\vx^{t+1}$
\Else
\State \Return $\vx^{t+1} \gets \vx^t$
\EndIf
\end{algorithmic}
\caption{Structured Voronoi Sampling}
\label{alg:rhmc}
\end{algorithm}

\begin{algorithm}[H]
\textbf{Input:} $\vr^t$: current momentum, $\vn$ normal vector of the boundary, $\Delta U$: difference in potential energy \\
\textbf{Output:} next momentum: $\vr^{t+1}$
\begin{algorithmic}[1]
 \State $\vr^t_{\perp} \gets \frac{\vn ^T \vr^t}{\|\vn\|_2^2} \vn$
 \State $\vr^t_{\parallel} \gets \vr^t - \vr^t_\perp$
 \If{$\|\vr^t_\perp\|_2^2 > 2 \Delta U$}
 \State $\vr^{t+1}_\perp \gets \sqrt{\|\vr^t_\perp\|_2^2 - 2 \Delta U} \cdot \frac{\vr^t_\perp}{\|\vr^t_\perp\|_2^2}$ \Comment{refract}
 \Else 
 \State $\vr^{t+1}_\perp \gets - \vr^{t}_\perp$ \Comment{reflect}
 \EndIf
 \State $\vr^{t+1} \gets \vr^{t+1}_\perp + \vr^t_\parallel$
 \State \Return $\vr^{t+1}$
\end{algorithmic}
\caption{\textsc{RefractReflect}}
\label{alg:rhmc-refractreflect}
\end{algorithm}
\subsection{Efficiently Finding Discontinuities} \label{sec:find-discont}
As explained in \Cref{sec:rhmc}, the key insight in \svs is to adjust the momentum when facing discontinuity in the potential function. Therefore, we first need to find discontinuities efficiently and then reflect/refract on the boundary of the discontinuity. Remember that at each leapfrog step, we move from $\vx^t$ to $\vx^{t+1} = \vx^t + \varepsilon \vr$. To find discontinuities along this trajectory, we divide each leapfrog step into fractions of a step and check for discontinuity. Concretely, we only advance the embedding by a fraction $\alpha$, i.e. $\vx' = \vx + \alpha \varepsilon \vr$ (line 4 in \Cref{alg:finddisc}). Then, we check for discontinuity by looking at the difference between the potential energies. If we don't observe any changes in the potential function, we take the next fraction of a step. Otherwise, we find the boundary and adjust the momentum (line 10 in \Cref{alg:finddisc}). Note that in \svs, finding the boundary is straightforward, and it is a hyperplane characterized by the normal vector,
% \clara{perhaps a dumb question, but what is a normal vector? \response{afra}{a vector which is perpendicular to the surface at a given point}}
 as the line goes through the two Voronoi centers on each side of the hyperplane.
\begin{algorithm}[h]
\textbf{Input:} $\vx^t$: current embeddings, $\vr$: current momentum, $U$: potential function, $\varepsilon$: step size, $\alpha$: discontinuity step size\\
\textbf{Output:} next sample: $\vx^{t+1}$
\begin{algorithmic}[1]
 \State $\tau \gets 0$
 \State $\vx \gets \vx^t$
 % \State $\vx^\tau \gets \vx^t$
 \While{$\tau < 1$}
 \State $\vx' \gets \vx + \alpha \varepsilon \vr$
 \State $\Delta U \gets U(\bV_{\bm^\star(\vx')}) - U(\bV_{\bm^\star(\vx)})$
 \If{$\Delta U = 0$} \Comment{there is no discontinuity}
    \State $\vx \gets \vx'$
    \State $\tau \gets \tau + \alpha$
 \Else 
    \State $\vn, \alpha' \gets \textsc{FindB}(\vx, \vx')$ \Comment{Returns the intersection $\alpha'$ and the normal vector of the boundary $\vn$}
    \State $\vx \gets \vx + \alpha' \varepsilon \vr$
    \State $\tau \gets \tau + \alpha'$
    \State $\vr \gets \textsc{RefractReflect}(\vr, \vn, \Delta U)$
 \EndIf
 \EndWhile
 \State \Return $\vx^{t+1} \gets \vx$
\end{algorithmic}
\caption{Find Discontinuity}
\label{alg:finddisc}
\end{algorithm}
\section{Dataset Statistics} \label{sec:dataset-stats}
This dataset is made available under the \href{https://creativecommons.org/licenses/by-sa/4.0/}{CC BY-SA 4.0} license. Our use of this dataset is for i) fine-tuning GPT-2, and ii) training classifiers for topic control, which clearly matches the intended use of this dataset mentioned by the creators as end-to-end training and generating text with content selection. A summary of dataset statistics can be found in \Cref{tab:dataset}. 
\begin{table}[t] 
\centering 
% \ra{1.3}
% \tabcolsep=0.11cm
\begin{tabular}{@{}lcccccccc@{}}\toprule
 Split & Chinese & English & Fast food & French & Indian & Italian & Japanese & Total\\ \midrule
 \texttt{train} &  $2929$ & $4100$ & $5891$ & $5889$ & $4412$ & $5909$ & $5996$ & $42061$ \\
 \texttt{valid} & $1489$ & $1780$ & - & - & - & - & - & $4672$ \\
 \texttt{test} & $492$ & $613$ & $632$ & $639$ & $497$ & $608$ & $638$ & $4693$ \\
\bottomrule
\end{tabular}
\caption{Number of restaurant reviews in each split and food type. Note that some reviews do not represent any specific food type, therefore, the total count is bigger than the sum count of reviews in each food category.}
\label{tab:dataset}
\end{table}

\section{Experimental Details} \label{sec:exp-detail}
\paragraph{Sampling algorithms.} Hyperparameters for each experiment are reported in \Cref{tab:hyper}. Following prior work \citep{langevin}, in algorithms based on Langevin dynamics, we apply an exponential decay to the step size by decreasing it to 0.05 after 500 steps. In all settings, we take 500 burn-in steps.

\begin{wraptable}{r}{5.5cm}
\caption{Inference time for different methods on the controlled generation experiments. All experiments are done on a single \texttt{A100-40GB} GPU.}
\label{tab:inf-time}
    \begin{tabular}{@{}lc@{}}\toprule
        method & \texttt{sec/batch}  \\ \midrule
        \fudge & $10$ \\
        \mucola & $30$\\
        \langevin (ours) & $31$\\
        \svs (ours) & $84$ \\ \bottomrule
    \end{tabular}
\end{wraptable}

\paragraph{Food Classifiers.} We train 3 classifiers. First, for experiments with \fudge, we follow the experimental detail as in the original paper \citep{yang-klein-2021-fudge} and train a $3$-layered \textsc{BiLSTM} classifier with $0.5$ dropout. The hidden dimension is set to $300$, and the embedding layer is trained from scratch. Second, in experiments with \mucola, \langevin, and \svs, to make the setup as close as to \fudge, we train a $3$-layered \textsc{BiLSTM} classifier with $0.5$ dropout. However, this time the \textsc{BiLSTM} is trained on top of \emph{frozen} GPT-2 representations, thus sharing the embedding layer that is necessary for these methods to work. Finally, to evaluate the success of the methods in following the control targets, we finetune a \textsc{RoBERTa} \citep{liu2020roberta} base model. The accuracy of all the classifiers and the number of trained parameters are reported in \Cref{tab:classifiers}. We train all the models on a single $\texttt{gtx\_1080\_ti GPU}$ with approximately 2 hours of total computational budget. 

\paragraph{Inference times.} We report inference times based on \texttt{sec/batch}. The number of decoded sentences per batch depends on the GPU memory and the size of the model. As depicted in \cref{tab:inf-time}, using Voronoi measures does \emph{not} increase the inference time (compare \mucola and \langevin). We observe that \svs inference time is longer, because of the extra momentum adjustment steps. However, one can reduce the inference time by increasing $\alpha$.

\begin{table}[t] 
\centering 
% \ra{1.3}
% \tabcolsep=0.11cm
\begin{tabular}{@{}lccccccc@{}}\toprule
 & \multicolumn{2}{c}{Toy Example} & \multicolumn{2}{c}{LM Sampling} & \multicolumn{3}{c}{Controlled Generation} \\ 
\cmidrule(lr){2-3}
\cmidrule(lr){4-5}
\cmidrule(lr){6-8} 
& $\varepsilon$ & $\alpha$ & $\varepsilon$ & $\alpha$ & $\varepsilon$ & $\alpha$ & $\gamma$ \\ \midrule
\vs & $0.1$ & $0.1$ & - & - & - & - & - \\
\svs & $0.1$ & $0.1$ & $1.$ & $0.4$ & $1.5$ & $0.3$ & $1.5$ \\
\hmc & $0.1$ & $0.1$ & - & - & - & - & - \\
\langevin & - & - & $1.$ & - & $1.5$ & - & $1.5$ \\
\mucola & $0.1$ & $0.1$ & $1.$ & - & $1.$ & - & $2.$ \\ 
\bottomrule
\end{tabular}
\caption{Hyperparameters used in sampling algorithms.}
\label{tab:hyper}
\end{table}

\begin{table}[ht]
    \centering
    \begin{tabular}{@{}lcccc@{}} \toprule
         model &  f1-score & precision & recall & \# params \\ \midrule
         \textsc{BiLSTM} & $0.87$ & $0.87$ & $0.87$ & $17$M \\
         \textsc{BiLSTMProbe} & $0.84$ & $0.84$ & $0.84$ & $37$M \\
         \textsc{RoBERTa} & $0.90$ & $0.91$ & $0.90$ & $124$M \\ \midrule 
         \textsc{BiLSTMProbe} & $0.90$ & $0.90$ & $0.90$ & $878$M \\ \bottomrule
    \end{tabular}
    \caption{Performance of food classifiers, and their number of learnable parameters, used in controlled generation experiment. All classifiers are trained and tested on E2E dataset.}
    \label{tab:classifiers}
\end{table}
\section{More Experiments on The Toy Model}\label{sec:more_toy}
To better understand the advantages of Voronoi sampling over \hmc or \mucola, we further look at the JS divergence between the reference distribution and the distribution of samples when increasing the number of iterations. As depicted in \cref{fig:toy-lengths}, while the divergence decreases with more iterations across all sampling methods, Voronoi sampling converges to the reference distribution with fewer iterations. We then look at the distribution of sampled elements in \cref{fig:toy-errs}. We observe that with $100$ iterations, \mucola undersamples the element with the maximum probability while oversampling other elements. Finally, we extend the toy model from $4$ squared cells in $\mathbb{R}^2$ to $2^k$ hypercube cells in $\mathbb{R}^k$ (\cref{fig:toy-hyper}). As the dimensionality increases, the divergence between the samples' distribution and the reference distribution also increases in all sampling methods. Importantly, Voronoi sampling consistently converges faster across different values for $k$.
\begin{figure}
     \centering
     \begin{subfigure}[b]{0.47\columnwidth}
         \centering
         \includegraphics[width=\textwidth]{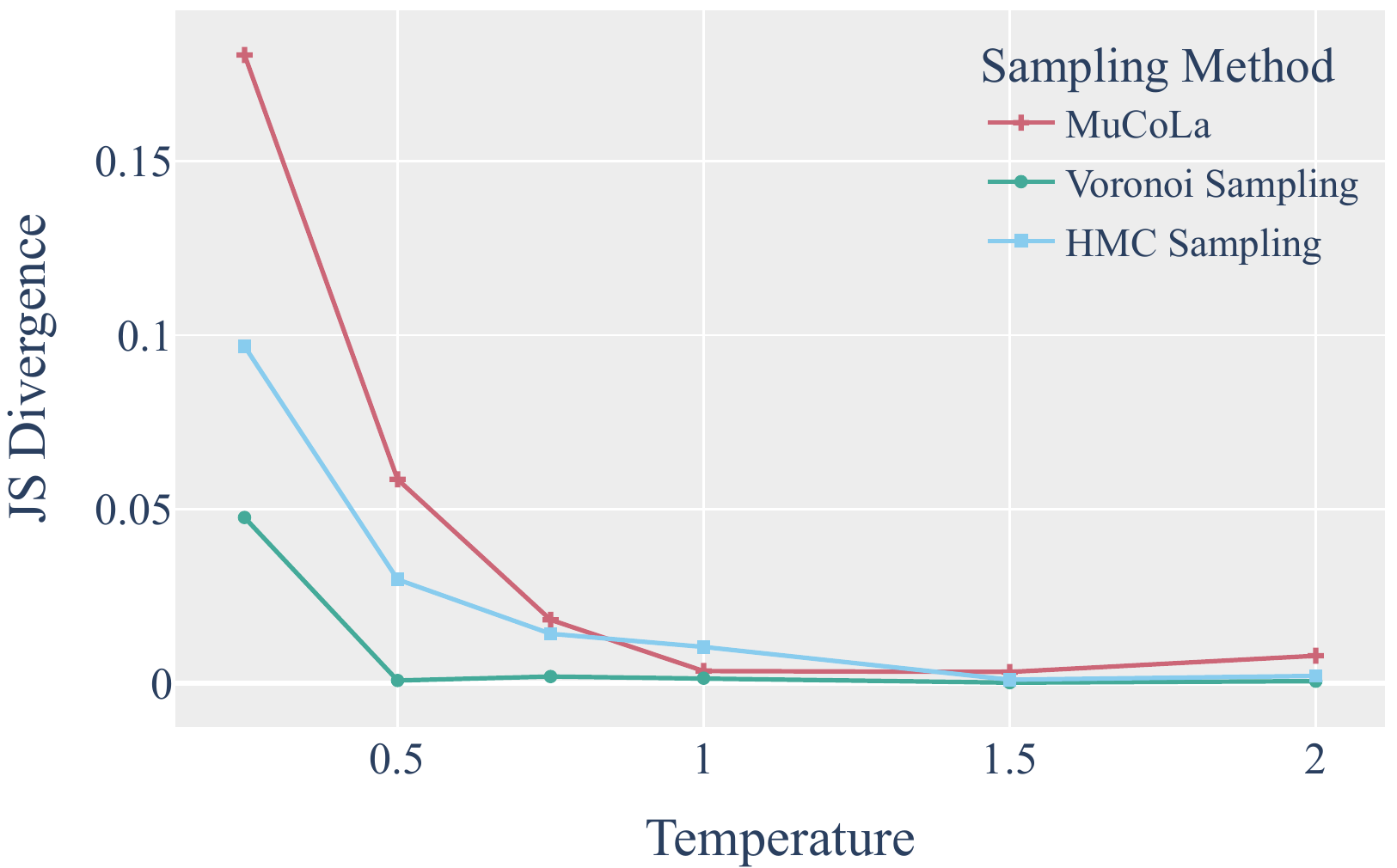}
         \caption{$500$ iterations}
         \label{fig:it500}
     \end{subfigure}
     \hfill
     \begin{subfigure}[b]{0.47\columnwidth}
         \centering
         \includegraphics[width=\textwidth]{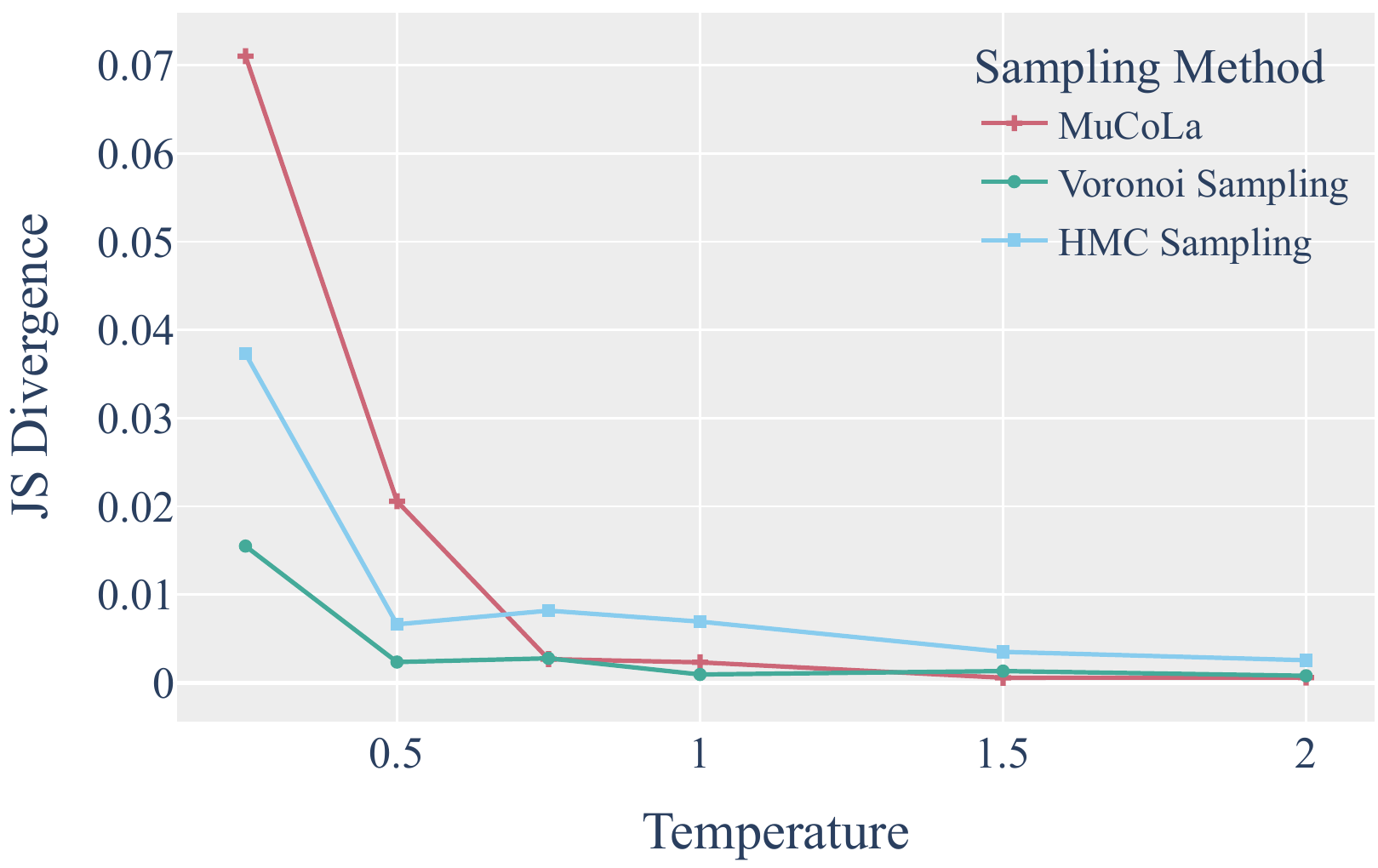}
         \caption{$1000$ iterations}
         \label{fig:it1000}
     \end{subfigure}
     \hfill
     \begin{subfigure}[b]{0.6\columnwidth}
         \centering
         \includegraphics[width=\textwidth]{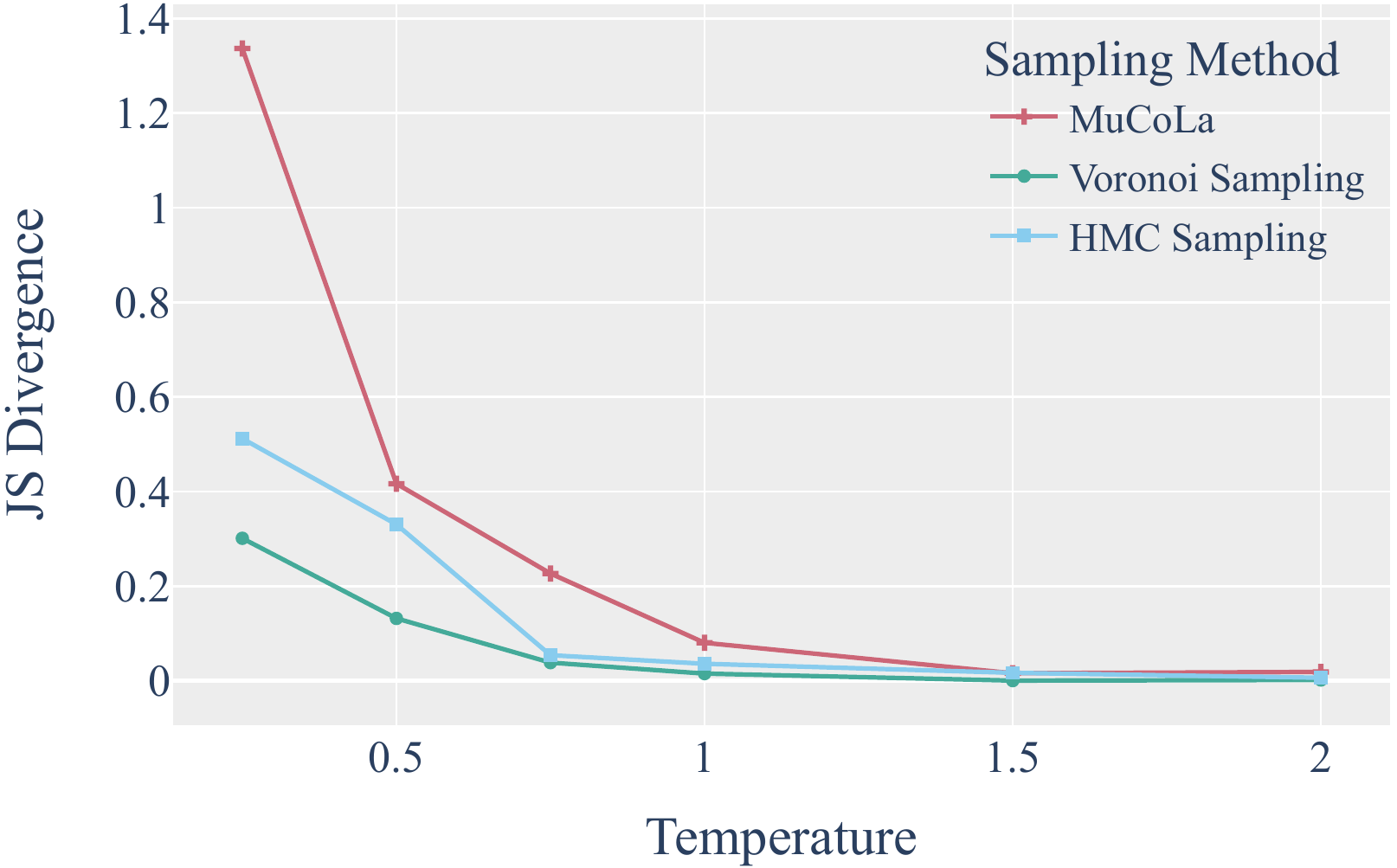}
         \caption{$100$ iterations}
         \label{fig:it100}
     \end{subfigure}
        \caption{Comparing JS divergence of methods using different numbers of iterations. In general, Voronoi sampling converge to the true distribution faster compared to \hmc and \mucola. As the number of iterations increases, the divergence between the samples' distribution and the true distribution decreases across all sampling methods.}
        \label{fig:toy-lengths}
\end{figure}
\begin{figure}
     \centering
     \begin{subfigure}[b]{0.47\columnwidth}
         \centering
         \includegraphics[width=\textwidth]{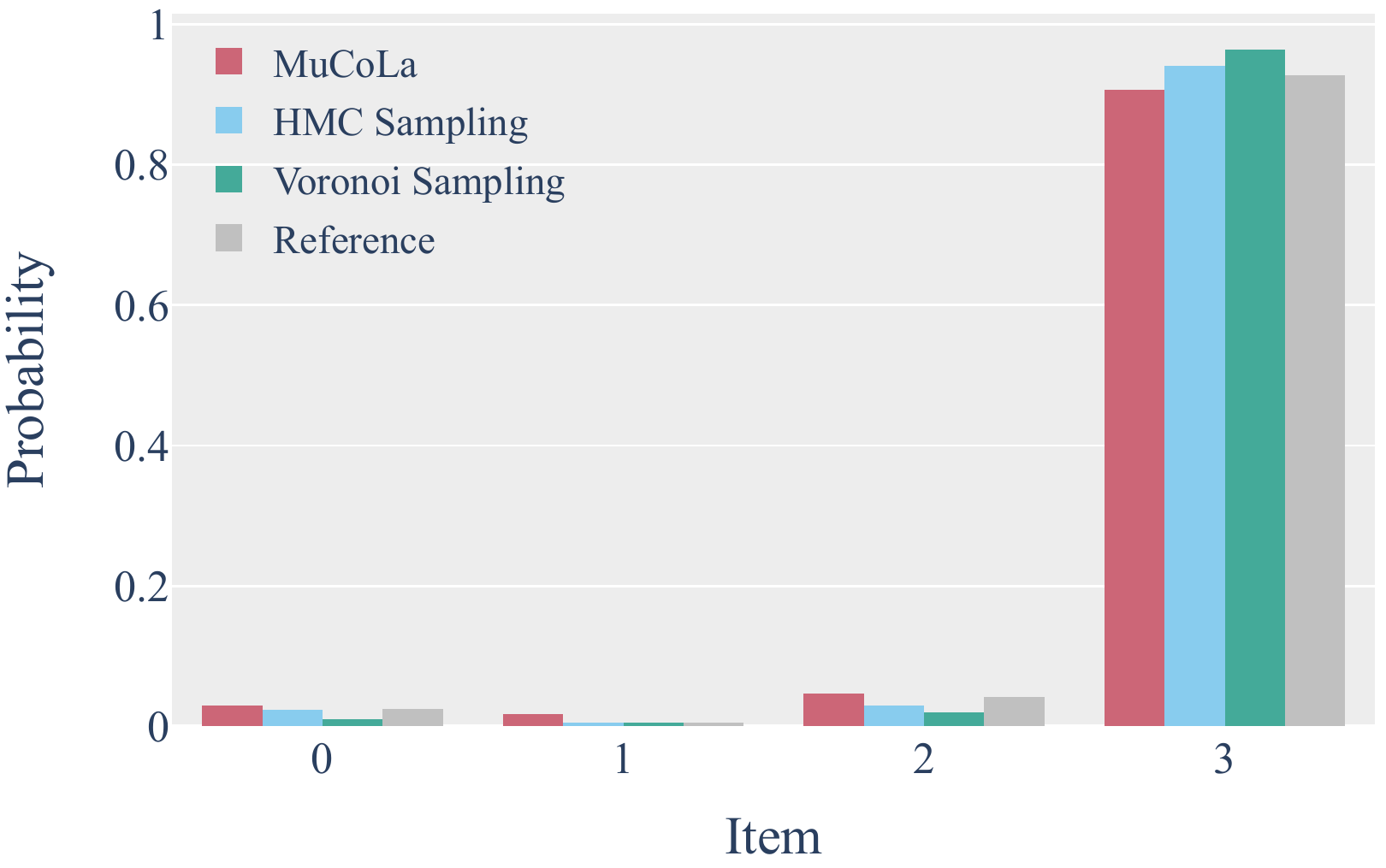}
         \caption{$500$ iterations}
         \label{fig:it500err}
     \end{subfigure}
     \hfill
     \begin{subfigure}[b]{0.47\columnwidth}
         \centering
         \includegraphics[width=\textwidth]{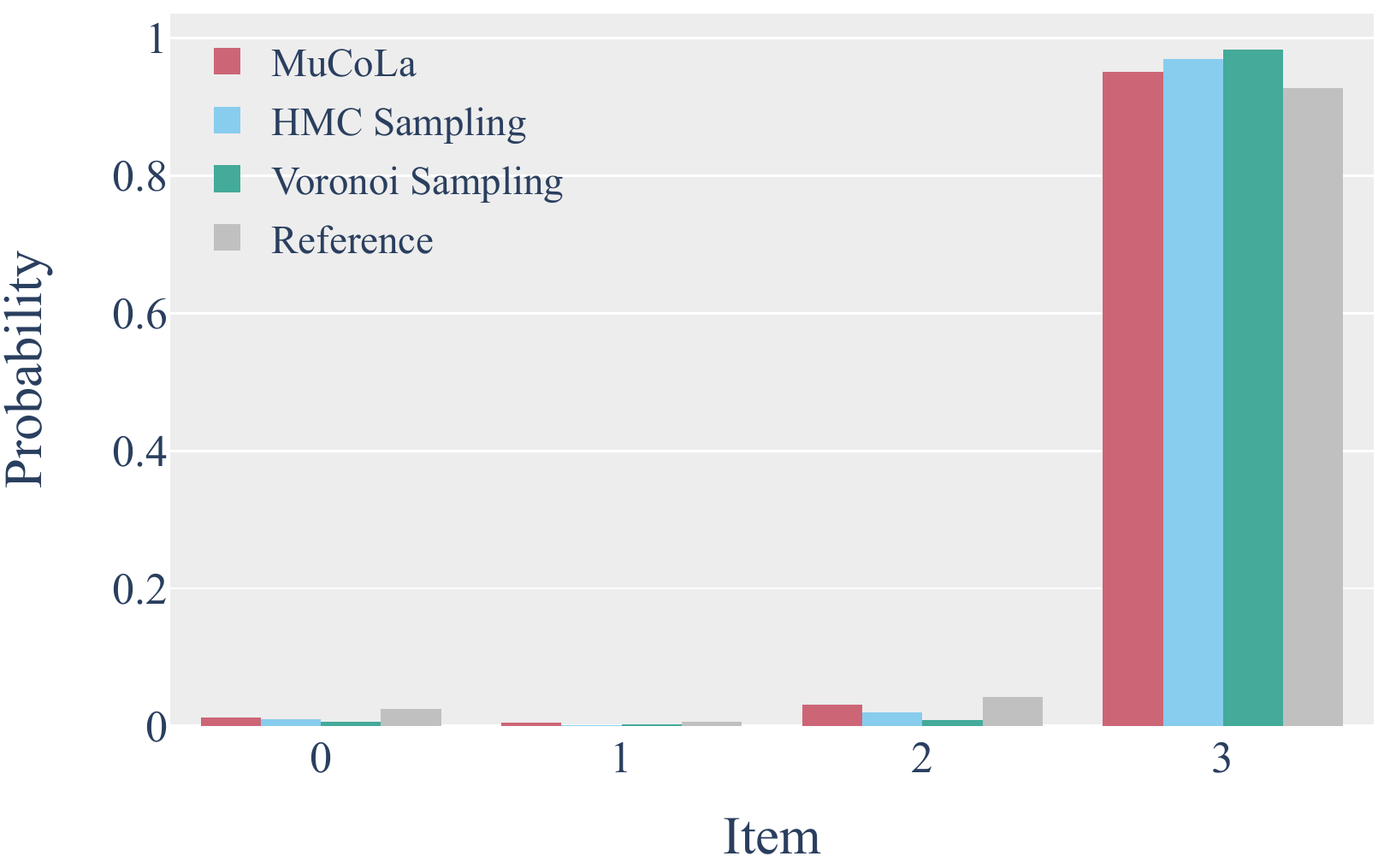}
         \caption{$1000$ iterations}
         \label{fig:it1000err}
     \end{subfigure}
     \hfill
     \begin{subfigure}[b]{0.6\columnwidth}
         \centering
         \includegraphics[width=\textwidth]{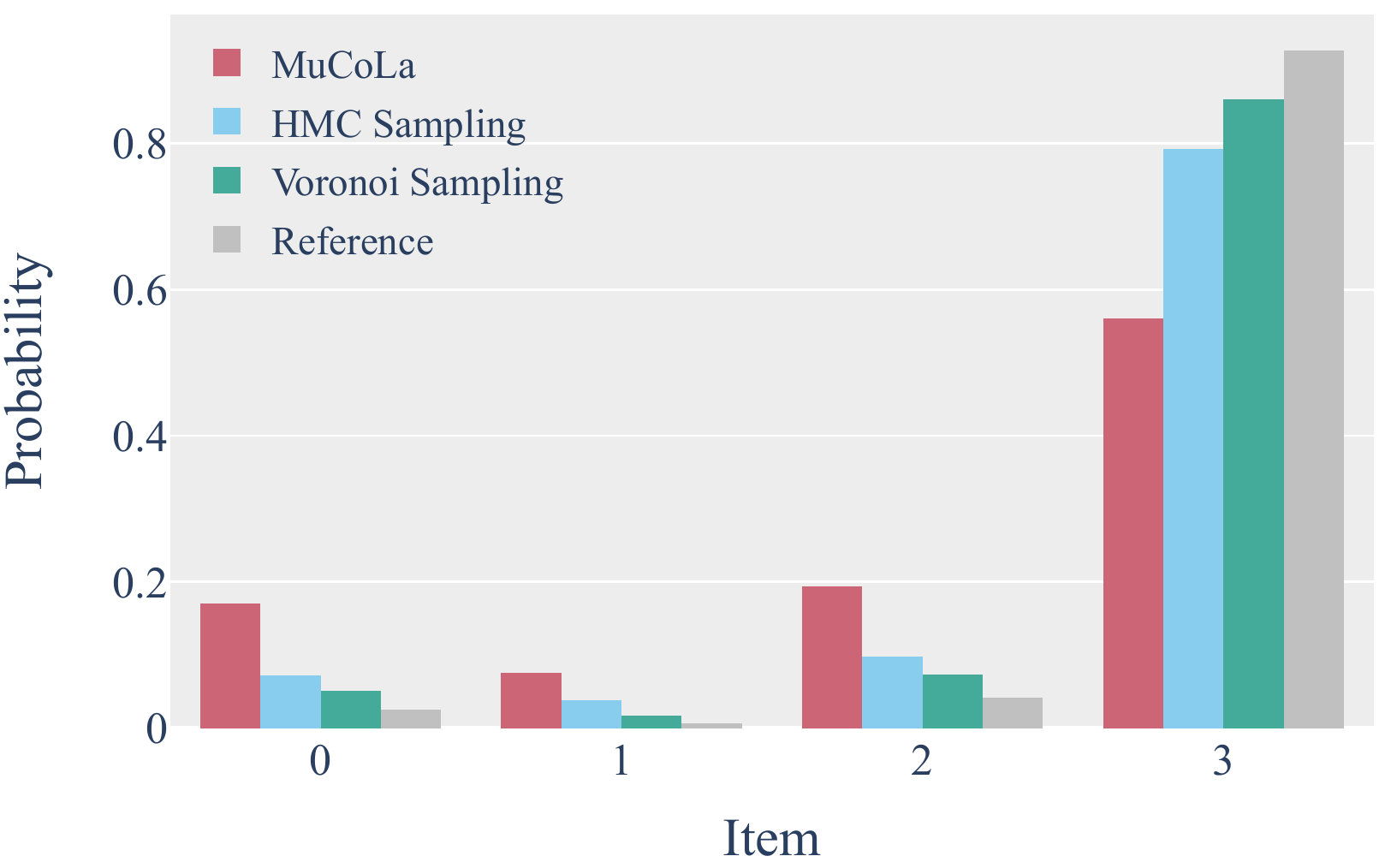}
         \caption{$100$ iterations}
         \label{fig:errit100}
     \end{subfigure}
        \caption{Comparing the distribution of sampled elements at temperature $0.25$. With $100$ iterations, \mucola undersamples the element with the highest probability while oversampling other elements.}
        \label{fig:toy-errs}
\end{figure}
\begin{figure}
    \centering
    \includegraphics[width=0.6\textwidth]{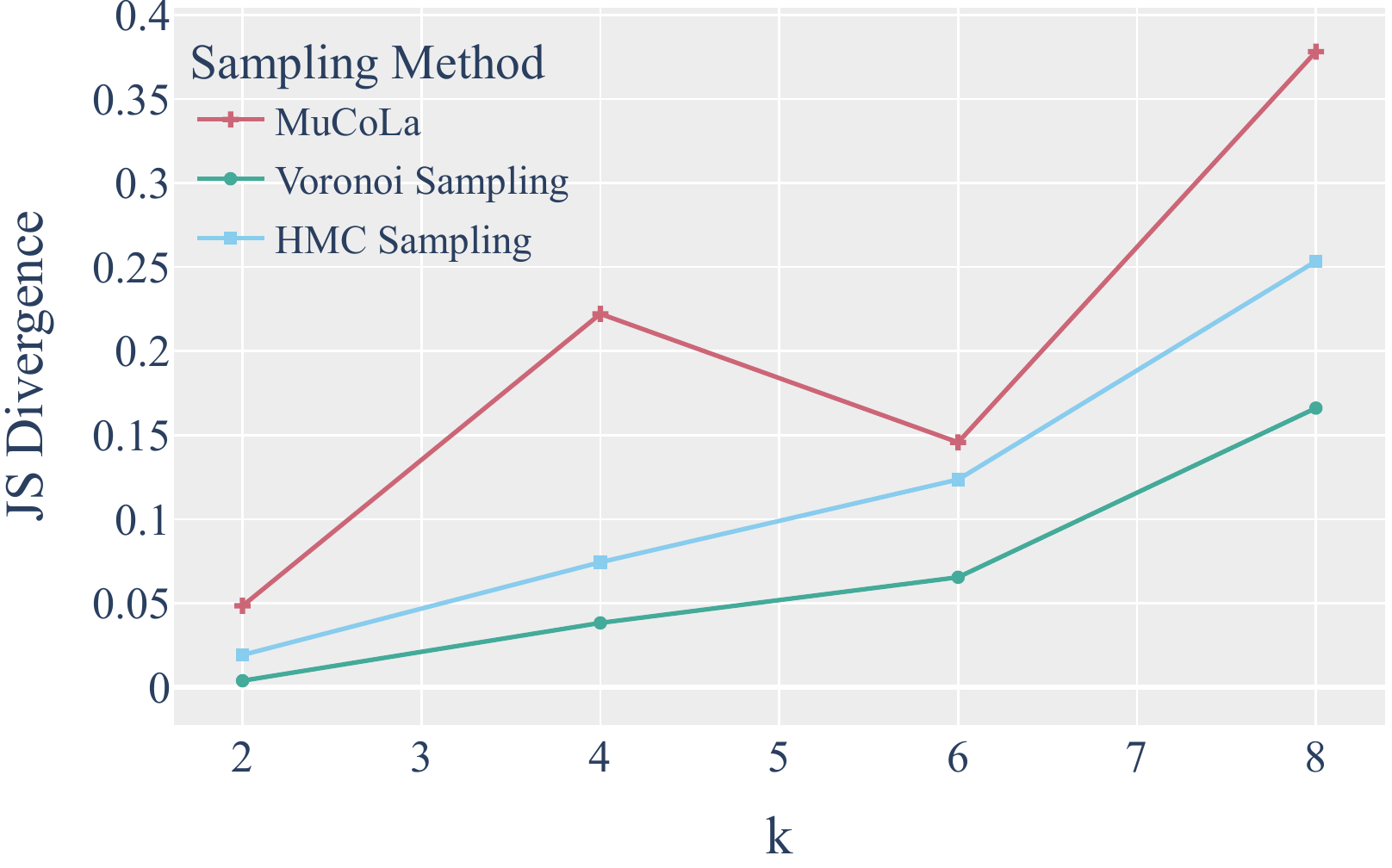}
    \caption{Comparing the distribution of sampled elements with the true distribution after $100$ iterations, at temperature $0.5$. There are $2^k$ Voronoi cells with equal base measures in $\mathbb{R}^k$, where elements to sample from are located at the center of each Voronoi cell. Voronoi sampling converges faster to the true distribution across all $k$ values. As the dimensionality of the sample space increases, the divergence of all methods increases. }
    \label{fig:toy-hyper}
\end{figure}

\section{Controlled Generation Results}
\Cref{tab:control-results}, \Cref{fig:control-food-complete} show the performance of 4 sampling methods on different metrics. We show a sample of generations for each control target in \Cref{tab:control-sample}. 

\begin{table*}[ht]
\centering 
% \ra{1.3}
% \tabcolsep=0.11cm
\resizebox{\linewidth}{!}{%
\begin{tabular}{@{}lcccccccccccccccccccc@{}}\toprule
 & \multicolumn{5}{c}{\fudge} & \multicolumn{5}{c}{\mucola} & \multicolumn{5}{c}{\langevin} & \multicolumn{5}{c}{\svs}\\
\cmidrule(lr){2-6}
\cmidrule(lr){7-11}
\cmidrule(lr){12-16}
\cmidrule(lr){17-21}
&  Success & PPL & Dist-1 & Dist-2 & Dist-3 & Success & PPL & Dist-1 & Dist-2 & Dist-3 & Success & PPL & Dist-1 & Dist-2 & Dist-3 & Success & PPL & Dist-1 & Dist-2 & Dist-3\\ \midrule
Chinese & $0.30$ & $5.41$ & $0.38$ & $0.53$ & $0.63$ & $0.30$ & $10.90$ & $0.21$ & $0.35$ & $0.47$ & $1.00$ & $16.21$ & $0.22$ & $0.36$ & $0.48$ & $0.90$ & $12.42$ & $0.22$ & $0.36$ & $0.49$\\
English & $0.15$ & $5.82$ & $0.41$ & $0.57$ & $0.67$ & $0.45$ & $17.76$ & $0.26$ & $0.39$ & $0.49$ & $0.70$ & $15.82$ & $0.27$ & $0.43$ & $0.55$ & $0.85$ & $15.00$ & $0.25$ & $0.41$ & $0.54$\\
Fast food & $0.25$ & $6.44$ & $0.41$ & $0.57$ & $0.68$ & $0.95$ & $5.39$ & $0.26$ & $0.38$ & $0.47$ & $1.00$ & $10.09$ & $0.23$ & $0.37$ & $0.49$ & $1.00$ & $11.43$ & $0.23$ & $0.38$ & $0.50$\\
French & $0.25$ & $6.02$ & $0.39$ & $0.55$ & $0.66$ & $0.75$ & $88.19$ & $0.29$ & $0.43$ & $0.54$ & $0.80$ & $12.87$ & $0.21$ & $0.36$ & $0.49$ & $0.95$ & $17.23$ & $0.21$ & $0.35$ & $0.47$\\
Indian & $0.55$ & $4.52$ & $0.41$ & $0.55$ & $0.64$ & $0.40$ & $7.87$ & $0.25$ & $0.40$ & $0.51$ & $0.95$ & $17.67$ & $0.26$ & $0.41$ & $0.54$ & $0.95$ & $12.89$ & $0.19$ & $0.35$ & $0.47$\\
Italian & $0.35$ & $5.49$ & $0.37$ & $0.53$ & $0.64$ & $0.50$ & $18.19$ & $0.27$ & $0.42$ & $0.53$ & $0.95$ & $14.29$ & $0.26$ & $0.41$ & $0.53$ & $0.90$ & $15.47$ & $0.26$ & $0.41$ & $0.52$\\
Japanese & $0.30$ & $5.44$ & $0.42$ & $0.55$ & $0.65$ & $0.75$ & $83.37$ & $0.27$ & $0.42$ & $0.54$ & $1.00$ & $12.90$ & $0.21$ & $0.36$ & $0.47$ & $0.95$ & $12.89$ & $0.18$ & $0.33$ & $0.45$\\
\bottomrule
\end{tabular}}
\vspace{2pt}
\caption{Evaluation of different sampling methods on controlled generation, using three criteria: success in following the control target (measured by the evaluator classifier), fluency (measured by perplexity), and diversity. }
\label{tab:control-results}
\end{table*}

\begin{figure*}[t]
    \centering
    \setlength{\belowcaptionskip}{-10pt}
    \includegraphics[width=\columnwidth]{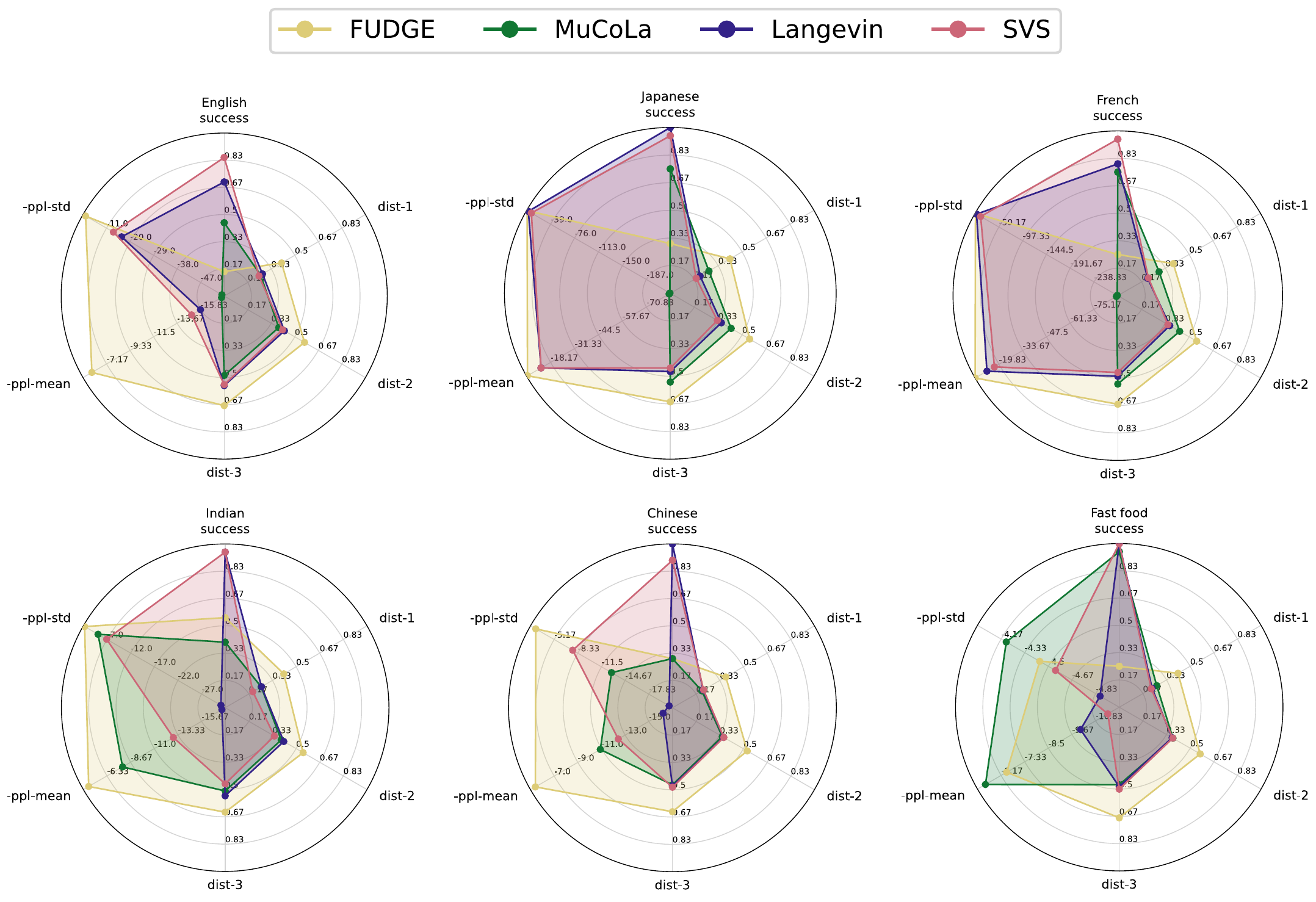}
    \caption{Evaluation of different sampling methods on restaurant review generation, along 6 axes: mean and standard deviation of negative perplexity\protect\footnotemark (\texttt{-ppl-mean} $\uparrow$ , \texttt{-ppl-std} $\uparrow$), the percentage of generated sentences adhering to the control target (\texttt{success} $\uparrow$), and diversity metrics (\texttt{dist-1} $\uparrow$, \texttt{dist-2} $\uparrow$, \texttt{dist-3} $\uparrow$). For most control targets, \svs achieves the highest success rate, with relatively low perplexity.}
    \label{fig:control-food-complete}
\end{figure*}

\begin{table*}[t] 
\centering 
% \ra{1.3}
% \tabcolsep=0.11cm
\resizebox{\linewidth}{!}{%
\begin{tabular}{@{}ll@{}}\toprule
 \multicolumn{2}{c}{\textbf{Chinese}} \\
 \fudge & In the city centre near Yippee Noodle Bar Chinese, is Alimentum. It has moderate prices and \\ 
 \mucola & and has a 1 out of 5. It has food and high customer rating. The Rice Boat is \\
 \langevin & It serves Chinese food with a low customer rating. The fast food and restaurant The Golden Curry is a \\
 \svs & It has a low customer rating and a price. The highly rated Chinese restaurant The Phoenix has a high \\ \midrule
 \multicolumn{2}{c}{\textbf{English}} \\
 \fudge & It has an average customer Rating. Bibimbap House has English food in the riverside area near  \\
 \mucola & and has a low customer rating. The Golden Curry is a children friendly, serving English food, with \\
 \langevin & It has low rating and is located near the to the city centre. The Phoenix is a English food \\
 \svs & Alimentum in the city centre near the a moderate price range. It serves English food, is \\
 \midrule
 \multicolumn{2}{c}{\textbf{Fast food}} \\
 \fudge & A fast food, coffee shop, Strada has a low customer rating, has a price range of over £30. It is \\
 \mucola & and is family friendly and serves fast food. The Wrestlers is a fast food coffee shop in the \\
 \langevin & It is located near the riverside, is a cheap family friendly fast food restaurant, and is called \\
 \svs & It is located near the river. The Mill is a cheap, fast food and coffee shop near the \\ \midrule
 \multicolumn{2}{c}{\textbf{French}} \\
 \fudge & It has a low-priced Inn French food. It is near Café Rouge.The Alimentum is a kid friendly fast food \\
 \mucola & The French restaurant The Waterman is located in the city centre. The price range is less than \\
 \langevin & It is a restaurant located in the riverside, the restaurant, offers French food with a price \\
 \svs & It is a family restaurant that serves French food with a price range and has a low customer rating. \\ \midrule
 \multicolumn{2}{c}{\textbf{Indian}} \\
 \fudge & The Phoenix Indian restaurant has moderate prices with a 3 out of 5 rating. Located on the \\
 \mucola & It is in the city and has a low customer rating. The Waterman is a low priced \\
 \langevin & It is not child friendly and it is near the river. It serves Indian food and a customer rating \\
 \svs & It is located in the city centre near The Portland Arms Indian food and has a low customer rating. \\ \midrule
 \multicolumn{2}{c}{\textbf{Italian}} \\
 \fudge & It has family Italian food and has a low  a moderate price range. The Rice Boat has an average \\
 \mucola & is a high priced Italian food restaurant with a customer rating of average. The Phoenix is a high \\
 \langevin & It is located in the city centre, it is not family friendly and is a coffee shop serving Italian \\
 \svs & It is located in the the city centre near The Portland Arms.The Eagle is an Italian restaurant. \\ \midrule
 \multicolumn{2}{c}{\textbf{Japanese}} \\
 \fudge & Japanese food. Its customer rating is 3 out of 5.The Phoenix is Japanese in the city centre \\
 \mucola & for Japanese food is located in the city centre. It has a low customer rating. The Golden \\
 \langevin & It is located in the riverside. It is a Japanese food. It is a pub restaurant \\
 \svs & It is located in the riverside. It is a low rated Japanese restaurant, and coffee shop.\\
\bottomrule
\end{tabular}}
\caption{Examples of sampled sentences from different control food targets.}
\label{tab:control-sample}
\end{table*} 

\end{document}